\newcommand{\fairsinkhorn}{\texttt{FairSinkhorn}\,}
\newcommand{\sinkhorn}{\texttt{Sinkhorn}\,}
\DeclareMathOperator*{\argmin}{arg\,min}
\DeclarePairedDelimiter\norm{\lVert}{\rVert}%
\newcommand{\kl}{\mathbf{KL}}
\definecolor{rightblue}{RGB}{76,114,176} %#4C72B0
\definecolor{rightorange}{RGB}{221,132,82} %#DD8452
\definecolor{aliceblue}{rgb}{0.94, 0.97, 1.0} % boxes
\definecolor{darkcerulean}{rgb}{0.03, 0.27, 0.49} % citation
\definecolor{iris}{rgb}{0.35, 0.31, 0.81} % URL
\definecolor{carmine}{rgb}{0.59, 0.0, 0.09} % toy example
\definecolor{green(munsell)}{rgb}{0.0, 0.66, 0.47} % toy example
\definecolor{celadon}{rgb}{0.67, 0.88, 0.69} % table
\definecolor{bluerow}{rgb}{0.0, 0.53, 0.74} % for table
\definecolor{lightorange}{RGB}{255, 219, 187} %#DD8452
\definecolor{lavenderblue}{rgb}{0.8, 0.8, 1.0}
\definecolor{blue(pigment)}{rgb}{0.2, 0.2, 0.6}
\definecolor{blue-violet}{rgb}{0.54, 0.17, 0.89}
\newtheorem{thm}{Theorem}[section]
\newtheorem{prop}[thm]{Proposition}
\newtheorem{defn}[thm]{Definition}
\newtheorem{ex}[thm]{Example}
\theoremstyle{definition}
\def\propcolor{lavenderblue!25}
\def\thmcolor{lightorange!45}
\newmdtheoremenv[topline=false, bottomline=false, leftline=false, rightline=false, backgroundcolor=\thmcolor,%
splittopskip=\topskip, skipbelow=\baselineskip, skipabove=\parskip, innertopmargin=10pt]{boxthm}{Theorem}[section]
\newmdtheoremenv[topline=false, bottomline=false, leftline=false, rightline=false, backgroundcolor=\propcolor,%
splittopskip=\topskip, skipbelow=\baselineskip, skipabove=\parskip, innertopmargin=10pt]{boxprop}[boxthm]{Proposition}
\newmdtheoremenv[topline=false, bottomline=false, leftline=false, rightline=false, backgroundcolor=\propcolor,%
splittopskip=\topskip, innertopmargin=\topskip,skipbelow=\baselineskip, skipabove=\parskip, innertopmargin=10pt]{boxexample}[boxthm]{Example}
\newmdtheoremenv[topline=false, bottomline=false, leftline=false, rightline=false, backgroundcolor=\propcolor,%
splittopskip=\topskip, innertopmargin=\topskip,skipbelow=\baselineskip, skipabove=\parskip, innertopmargin=10pt]{boxcor}[boxthm]{Corollary}
\newmdtheoremenv[topline=false, bottomline=false, leftline=false, rightline=false, backgroundcolor=\propcolor,%
splittopskip=\topskip, innertopmargin=\topskip,skipbelow=\baselineskip, skipabove=\parskip, innertopmargin=10pt]{boxlem}[boxthm]{Lemma}
\newmdtheoremenv[topline=false, bottomline=false, leftline=false, rightline=false, backgroundcolor=\propcolor,%
splittopskip=\topskip,innertopmargin=\topskip, skipbelow=\baselineskip, skipabove=\parskip, innertopmargin=10pt]{boxdef}[boxthm]{Definition}
\newmdtheoremenv[topline=false, bottomline=false, leftline=false, rightline=false, backgroundcolor=\propcolor,%
splittopskip=\topskip,innertopmargin=\topskip, skipbelow=\baselineskip, skipabove=\parskip, innertopmargin=10pt]{boxasm}[boxthm]{Assumption}
\newcommand{\bbE}{\mathbb{E}}
\newcommand{\bbP}{\mathbb{P}}
\newcommand{\bfF}{\mathbf{F}}
\newcommand{\calL}{\mathcal{L}}
\newcommand{\calX}{\mathcal{X}}
\newcommand{\calY}{\mathcal{Y}}
\def\1{\bm{1}}
\DeclareMathAlphabet{\mathsfit}{\encodingdefault}{\sfdefault}{m}{sl}
\SetMathAlphabet{\mathsfit}{bold}{\encodingdefault}{\sfdefault}{bx}{n}
\renewcommand*{\backrefalt}[4]{
  \ifcase #1 
  No citations.
  \or
  page~#2
  \else
  pages~#2
  \fi
}
\newcommand{\pistar}{\pi_\infty^\star}
\newcommand{\klpop}[1]{\kl(#1||\mu\otimes\eta)}
\newtheorem*{rep@theorem}{\rep@title}
\newcommand{\newreptheorem}[2]{
\newenvironment{rep#1}[1]{
 \def\rep@title{#2~\ref{##1}}
 \begin{rep@theorem}}
 {\end{rep@theorem}}}
\newtheorem{theorem}{Theorem}[section]
\newtheorem{lemma}[theorem]{Lemma}
\newtheorem{remark}[theorem]{Remark}
\newcommand{\RETURN}{\textbf{\STATE Return: }}
\icmltitlerunning{Optimal Transport under Group Fairness Constraints}
\begin{document}
\twocolumn[
  \icmltitle{Optimal Transport under Group Fairness Constraints}
\icmlsetsymbol{equal}{*}
  \begin{icmlauthorlist}
    \icmlauthor{Linus Bleistein}{equal,EPFL}
    \icmlauthor{Mathieu Dagréou}{equal,premedical}
    \icmlauthor{Francisco Andrade}{equal,premedical,heka}
    \icmlauthor{Thomas Boudou}{premedical}
    \icmlauthor{Aurélien Bellet}{premedical}
  \end{icmlauthorlist}
\icmlaffiliation{premedical}{PreMeDICaL, Inria, Inserm, Idesp, Université de Montpellier}
\icmlaffiliation{heka}{HeKA, Inria, Paris}
\icmlaffiliation{EPFL}{School of Computer and Communication Sciences, School of Life Sciences, EPFL, Lausanne, Switzerland}
\icmlcorrespondingauthor{Linus Bleistein}{linus.bleistein@epfl.ch}
\icmlcorrespondingauthor{Mathieu Dagréou}{mathieu.dagreou@inria.fr}
\icmlcorrespondingauthor{Francisco Andrade}{francisco.de-lima-andrade@inria.fr}
  \icmlkeywords{Optimal transport, group fairness, matching}
  \vskip 0.3in
]
\printAffiliationsAndNotice{\icmlEqualContribution} 
\begin{abstract}
Ensuring fairness in matching algorithms is a key challenge in allocating scarce resources and positions. Focusing on Optimal Transport (OT), we introduce a novel notion of group fairness requiring that the probability of matching two individuals from any two given groups in the OT plan satisfies a predefined target. We first propose a modified Sinkhorn algorithm to compute perfectly fair transport plans efficiently. Since exact fairness can significantly degrade matching quality in practice, we then develop two relaxation strategies. The first one involves solving a penalized OT problem, for which we derive novel finite-sample complexity guarantees.
Our second strategy leverages bilevel optimization to learn a ground cost that induces a fair OT solution, and we establish a bound
on the deviation of fairness when matching unseen data.
Finally, we present empirical results illustrating the performance of our approaches and the trade-off between fairness and transport cost.\looseness=-1
\end{abstract}
\section{Introduction}

Algorithmic matching mechanisms play an increasing role in modern societies, handling the distribution of rare goods by centrally connecting individuals or firms based on their characteristics and preferences rather than through price-driven markets.
Examples of such mechanisms include online job recommendations, college admissions systems, dating and ride-hailing apps, and kidney allocation circuits. 

An increasing concern is the fairness of such mechanisms: since matchings partially determine individual outcomes (for instance, through marriage or schooling decisions), they may result in unfair outcomes. For instance, the growing importance of dating apps in contemporary matrimonial dynamics has raised concerns over their role in fostering social and racial marital homogamy \citep{piketty2013capital,zheng2018fairness,jia2018online,zhao2024leveraging}, a phenomenon often highlighted as a key driver of inequality. Similarly, school admission systems have been criticized for excessively matching students from privileged backgrounds with elite institutions regardless of their academic potential \citep{hiller2012choix,fack2014impact,simioni2022comment}, causing underprivileged students to lose the benefits of elite higher education. More generally, \textit{these concerns arise because matching decisions overly depend on individual characteristics that are highly correlated with attributes defining social groups, such as ethnicity or socioeconomic background.} Such dependence may lead to highly homogeneous pairings, which may conflict with political, philosophical, and legal principles that promote diversity and equality of opportunity. Algorithmic decisions systematically disadvantaging certain groups are often referred to as \textit{group fairness issues}.

Group fairness has been extensively studied in supervised and unsupervised learning \citep{barocas-hardt-narayanan}. In supervised learning, where an algorithm predicts an unknown outcome $Y$ from features $X$, one example of group fairness is demographic parity \cite{calders2009building}, which requires that the prediction $f(X)$ be independent of a protected attribute $S$ (such as gender or age). 
In unsupervised learning, one approach is equality of representation error, which requires that the reconstruction error of a learned latent representation be independent of the sensitive attribute $S$ \citep{samadi2018price}.
Other notions of group fairness have also been proposed in both supervised and unsupervised contexts \citep{barocas-hardt-narayanan}.\looseness=-1

Extending these fairness notions to matching problems is, however, nontrivial, as matching cannot be directly framed as either a supervised or unsupervised learning task. Most prior work has focused on individual fairness, ensuring that similar individuals receive similar matches \citep{devic2023fairness}, or on problem-specific notions, such as guaranteeing a minimal participation level in settings where individuals can opt out \citep{ashlagi2014free}. We discuss this literature in more details in the related work section.

\begin{figure*}
    \centering
    \includegraphics[width=\linewidth]{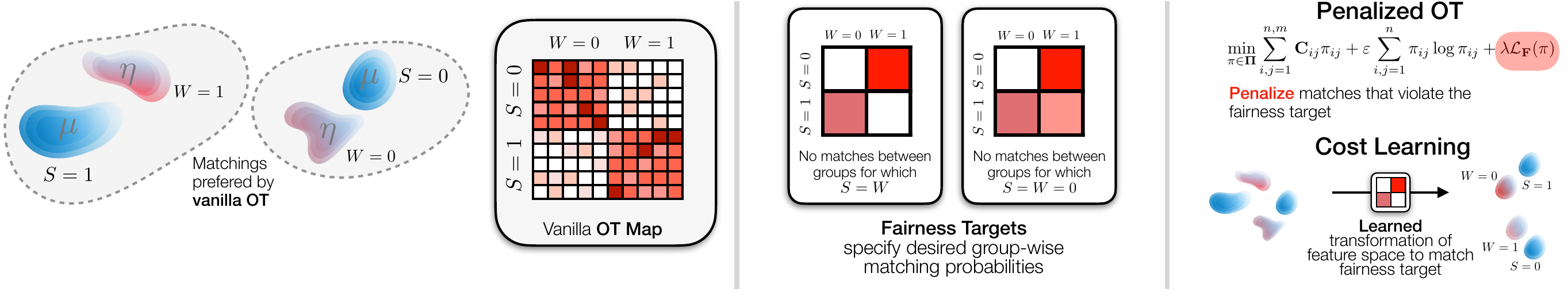}
    \caption{Illustration of optimal transport under group fairness constraints. \textbf{Left:} Due to the correlation between group attributes ($S$ and $W$) and features, vanilla optimal transport produces transport plans that concentrate most of the mass between matching groups. \textbf{Center}: Our novel notion of fairness is based on \textit{fairness targets}, defined as the desired group-level matching probabilities specified by a central planner. \textbf{Right}: We propose two approaches for solving this problem: a fairness-penalized optimal transport method, and a cost-learning approach that transforms the input space so that the resulting optimal transport plan aligns as closely as possible with the fairness targets.}
    \label{fig:illustration}
\end{figure*}

\textbf{Contributions.} In this work, we study group fairness in matching using the framework of optimal transport (OT). Since the pioneering work of Kantorovitch, OT has been a long standing modeling tool in economics and social sciences \citep{galichon2018optimal} and has gained traction in the last decade as a way of studying complex matching mechanisms in economics \citep{galichon2021unreasonable,mastrandrea2025optimal}, logistics and urban network studies \citep{leite2022urban}, demand estimation and pricing \citep{qu2025sinkhorn}, decision theory, as well as labor and migration economics \citep{carlier2023sista,hazard2025whom}. 

We introduce a novel notion of fairness for optimal transport that allows a central planner—such as a government or regulatory body—to specify the probability that individuals from one group are matched with individuals from another group. This flexible definition can enforce a variety of constraints, including limited homogamy (reducing the likelihood that individuals with the same sensitive attributes are matched), matching quotas (ensuring that a certain percentage of a minority group receives desirable opportunities), or broader goals of social diversity. We summarize our contributions below, with an overview provided in Figure~\ref{fig:illustration}.
\begin{enumerate}[itemsep=1pt, parsep=2pt, topsep=1pt]
    \item We introduce a novel fair optimal transport problem, and present the \fairsinkhorn algorithm for computing OT plans under exact fairness constraints.
    \item We propose two relaxed approaches. The first adds a penalty to the optimal transport objective to encourage fair plans, yielding a convex optimization problem where the cost–fairness trade-off is easily controlled via the penalty parameter. The second learns a cost function such that the resulting transport plan is fair, allowing straightforward matching of new samples.
    \item For the first approach, we provide a novel sample complexity bound for penalized entropic optimal transport, which we believe to be of independent interest. For the cost learning approach, we bound the expected fairness deviation between finite samples and the distribution, holding for any cost in a given parameterized family.
    \item We investigate the empirical performance of the proposed approaches through numerical experiments.
\end{enumerate}

Throughout this article, all proofs are deferred to the appendix. Our code is available at {\small \url{https://github.com/LinusBleistein/fair_ot/}}.

\section{Related Work}
\textbf{Fairness in matching algorithms.}
Fairness has been studied across a variety of matching problems. A first line of work investigates individual-level fairness in Gale–Shapley–type matching mechanisms \citep{gale1962college}, requiring that similar individuals receive similar outcomes \citep{karni2021fairness, devic2023fairness}. A second line of work focuses on group-level fairness, primarily in the specific case of kidney paired donation (KPD) programs \citep{ashlagi2014free,dickerson2014price,zhang2025fairness,lobo2025fair}. Our work departs from this literature in two key ways.
First, we do not consider iterative agent-level mechanisms (in which agents are sequentially paired until the match is stable) but instead model matching through (entropic) optimal transport.
Second, rather than individual rankings, we assume that both populations are structured by sensitive attributes and impose explicit inter-group mass constraints specifying how much probability mass should flow between subgroups.
Another related line of work studies fairness-constrained bipartite matching on graphs $\big(\mathcal{U} \cup \mathcal{V}, \mathcal{E}\big)$, where edges encode feasible matches \citep{godsil1981matchings,zdeborova2006number,noiry2021online}. Recent work in this setting includes \citep{castera2026price}, who partition one side of the graph ($\mathcal{U}$ or $\mathcal{V})$ into groups and limit the number of vertices that can be matched from each group, and \citep{panda2024individual}, who further impose individual fairness constraints.
Our work is fundamentally different from theirs since $(i)$ individuals are represented by features rather than graph positions; $(ii)$ we allow mass splitting rather than integral matchings; and $(iii)$ our formulation is symmetric in the two populations.

\textbf{Fairness and optimal transport.}
A growing literature uses optimal transport to obtain or characterize fair learning algorithms \citep{gordaliza2019obtaining,gouic2020projection,chzhen2020fair,DBLP:conf/aaai/ChiappaJSPJA20,DBLP:conf/nips/BuylB22,hu2023fairness,DBLP:conf/icml/XianY023,xiong2024fair,divol2024demographic}.
A central insight is that fair prediction can often be formulated as a Wasserstein barycenter or projection problem \citep{gouic2020projection,chzhen2020fair,divol2024demographic}. In this line of work, OT is a tool to enforce fairness in downstream prediction tasks. Our perspective is orthogonal:\textit{ we study the fairness of the transport plan itself}.
Closest to our problem is \citet{nguyen2024towards}, who impose fairness on a Wasserstein barycenter problem by enforcing approximate equality of its sliced-Wasserstein distances to the input measures. In contrast, we study the problem of computing transport plans that satisfy explicit mass
constraints between groups defined by sensitive attributes, a setting that, to the best of our knowledge, has not been previously considered.

\textbf{Constrained and statistical optimal transport.}
Our approach builds on constrained optimal transport, which enforces structural properties of transport plans \citep{courty2016optimal,alvarez2017structured,blondel2018smooth,paty2019subspace,scetbon2021equitable,liu2022sparsity,manupriya2024submodular}, including bounds on transported mass \citep{korman2015optimal}.
We build on and extend the theory of statistical and penalized optimal transport \citep{rakotomamonjy2015generalized,genevay2019sample,rigollet2022sample} to derive finite-sample guarantees and to control deviations from the prescribed fairness constraints. We provide an extended related work review in Appendix~\ref{appendix:related_work}.
\section{Formalizing Group Fairness in OT}
\subsection{Optimal Transport}
We consider the entropic optimal transport problem \citep{peyre2019computational,genevay2019sample,feydy2019interpolating,pooladian2021entropic,keriven2022entropic} between two probability distributions $\mu$ and $\eta$ with ground cost $c$:
\begin{align*}
    \min\limits_{\pi \in \Pi(\mu,\eta)} \int_{\mathcal{X}\times\mathcal{Y}} c(x,y)\,d\pi(x,y) + \varepsilon \kl\big(\pi | \mu \otimes \eta \big),
\end{align*}
where the minimum is taken over couplings $\Pi(\mu,\eta)$ of $\mu$ and $\eta$ and where $\kl$ denotes the Kullback-Leibler (KL) divergence \citep{CSZISAR_SUBDIFF}. We focus on the entropic formulation as regularization ensures uniqueness and smoothness of the optimal coupling, which is crucial for efficient computation and gradient-based optimization \citep{peyre2019computational}.

In the finite-sample setting, when $\mu$ and $\eta$ are sums of Dirac measures i.e. if $\mu = 1/n\sum_{i=1}^n  \delta_{\mathbf{x}_i}$ and $\eta~=~1/m\sum_{j=1}^m \delta_{\mathbf{y}_j}$, the problem above reduces to
\begin{align}\label{pb:ot_finite_sample}
    \min\limits_{\mathbf{\Pi} \in \Pi} \sum_{i=1}^n \sum_{j=1}^m \mathbf{\Pi}_{ij} \mathbf{C}_{ij} +\varepsilon\kl\big(\mathbf{\Pi}\big)\enspace,
\end{align}
where $\Pi := \big\{ \mathbf{\Pi} \in \mathbb{R}_+^{n \times m}\,|\, \mathbf{\Pi}\mathbf{1}_m= 1/n\mathbf{1}_n,\, \mathbf{\Pi}^\top\mathbf{1}_n = 1/m\mathbf{1}_m \big\}$, $\kl\big(\mathbf{\Pi}\big) := \sum_{ij}\mathbf{\Pi}_{ij}\log \mathbf{\Pi}_{ij} $ and $\mathbf{C}\in\mathbb{R}^{n\times m}$ with $\mathbf{C}_{ij}:=c(x_i,y_j)$. The KL divergence is strictly convex, and $\Pi$ is a nonempty and bounded set. Therefore, \eqref{pb:ot_finite_sample} admits a unique minimizer, which we denote by $\mathbf{\Pi}_{\varepsilon} (\mathbf{C})$.
\subsection{Fair Optimal Transport}\label{ssec:fairsinkhorn}
We consider two distributions $\mu \in \mathcal{P}(\mathcal{X}\times \mathcal{S})$ and $\eta \in \mathcal{P}(\mathcal{Y}\times \mathcal{W})$ to be matched, where $\mathcal{X}$ and $\mathcal{Y}$ represent feature spaces while $\mathcal{S}$ and $\mathcal{W}$ correspond to sensitive attributes (e.g., gender, ethnicity or age) defining groups of entities in $\mu$ and $\eta$ respectively. The sets $\mathcal{W}$ and $\mathcal{S}$ are assumed to be finite, and are identified, respectively, with $\{1,\dots,K_w\}$ and $\{1,\dots,K_s\}$. To obtain theoretical results on sample complexity, we will need an assumption on the support of the measures, which is standard in statistical optimal transport \citep{rigollet2022sample}.\looseness =-1
\begin{boxasm}
\label{asm:compact_support}
    $\mu$ and $\eta$ are compactly supported.
\end{boxasm}
 We denote by $p_s = \mathbb{P}(S=s)$ and $q_w = \mathbb{P}(W=w)$ the probability of each group $i$ in $\mu$ and $j$ in $\eta$, respectively, and by $\mathbf{p}$ and $\mathbf{q}$ the resulting distributions over   $\mathcal{S}$ and $\mathcal{W}$.

\textbf{Fairness targets.} 
Optimal transport imposes structure on the coupling of two probability distributions through the cost function, making it more likely to match points whose transportation cost to one another is low. While this inductive bias is often desirable, it can lead to undesirable outcomes when features in $\mathcal{X}$ and $\mathcal{Y}$ are correlated with the sensitive attributes in $\mathcal{S}$ and $\mathcal{W}$, as the resulting coupling may reflect, preserve, or amplify disparities between the corresponding groups, as illustrated in the following example. 
\begin{ex}
    \label{ex:1}
    Consider a school assignment system, in which students have a location $X \in \mathbb{R}^2$ and social status $S \in \{\textnormal{high}, \textnormal{low}\}$, and schools have a location $Y \in \mathbb{R}^2$ and prestige $W \in \{\textnormal{elite}, \textnormal{non-elite}\}$. If high-status students tend to live near elite schools and low-status students near non-elite schools, then optimal transport with Euclidean cost—minimizing the distance between students and schools—will produce highly segregated schools.
\end{ex}
In this example, the block-sparse structure of the optimal transport plan can be seen as a source of unfairness, as the matching will be highly correlated with the social status of the students and the elitist nature of the schools.

To address such issues and define fair transport plans, we are given a \emph{fairness target} $\mathbf{F}$, which is a $K_s \times K_w$ matrix that specifies, for each pair of groups $(s,w) \in \mathcal{S}\times \mathcal{W}$, the desired probability of matching members of group $s$ with group $w$. To be valid, the matrix $\mathbf{F}$ should itself be a coupling between $\mathbf{p}$ and $\mathbf{q}$: it should be non-negative and satisfy the following constraints for all $(s,w)$:
\begin{align*}
    \textstyle\sum{w=1}^{K_w}\mathbf{F}_{sw} = p_s \textnormal{ and } \sum{s=1}^{K_s}\mathbf{F}_{sw} = q_w\enspace.
\end{align*}
We hence write $\Pi(\mathbf{p},\mathbf{q})$ the set of fairness targets.
\begin{ex}
    \label{ex:2}
    Consider the segregated schooling system of Example~\ref{ex:1}. Let $p := \mathbb{P}(S=\textnormal{low})$ and  $q := \mathbb{P}(W=\textnormal{non-elite})$. A newly appointed city administrator wishes to limit social homogamy within public schools, and hence requires $60\%$ of student with low social status to be matched to elite schools. The fairness target can be written as
    \begin{align*}
        \mathbf{F} = \begin{bmatrix}
            0.4 \times p &  0.6 \times p\\
            q -  0.4 \times p & 1-q - 0.6 \times p 
        \end{bmatrix}.
    \end{align*}
\end{ex}
\begin{remark}
    Many real-world rules or legislative goals can be naturally mapped to the fairness target concept of our framework. In the US, hiring processes are evaluated under the four-fifths rule, which requires that the selection rate of any demographic group should be at least 80\% of that of the group with the highest selection rate.\footnote{{\href{https://en.wikipedia.org/wiki/Disparate_impact}{Disparate impact (Wikipedia Page)} (Accessed: 2026-05-08)}}
    In the French higher education system, several institutions are subject to minimum quotas for scholarship holders.\footnote{{\href{https://draaf.hauts-de-france.agriculture.gouv.fr/parcoursup-2025-encourager-la-mobilite-sociale-et-geographique-sur-parcoursup-a3979.html}{Parcoursup 2026 : Encourager la mobilité sociale et géographique sur Parcoursup}} (Accessed: 2026-05-08)} 
    Lastly, at the European level, a recent EU Directive sets a target whereby listed companies should ensure that at least 40\% of executive non-executive director positions, or 33\% of all director positions, are held by women by 2026.\footnote{\href{https://eur-lex.europa.eu/eli/dir/2022/2381/oj/eng}{Directive (EU) 2022/2381} (Accessed: 2026-05-08)} 
\end{remark}
\textbf{Fair optimal transport.} We say that a coupling is $\mathbf{F}$-fair if the amount of mass transported from group $s$ to group $w$ equals the prescribed fairness target $\mathbf{F}_{sw}$. An $\mathbf{F}$-fair optimal transport plan is then an $\mathbf{F}$-fair coupling that minimizes the transport cost among all such couplings.
\begin{defn}[$\mathbf{F}$-Fair Optimal Transport]
\label{def:fair_ot}
        Let $\mu\in \mathcal{P}(\mathcal{X}\times \mathcal{S})$ and  $\eta \in \mathcal{P}(\mathcal{Y}\times \mathcal{W})$. An $\mathbf{F}$-fair optimal transport plans between $\mu$ and $\eta$ is defined as
\begin{align}
    \label{eq:fairOTExactFairness}
     \pi^\star_{c,\varepsilon,\mathbf{F}} \!\in\!\! \argmin\limits_{\pi \in \Pi_\mathbf{F}(\mu,\eta)} \int\!\! c(x,y)\,d\pi(x,y) \!+\! \varepsilon \mathbf{KL}(\pi| \mu \otimes \eta)
\end{align} where the set of $\mathbf{F}$-fair couplings $\Pi_\mathbf{F}(\mu,\eta)$ is given by
\begin{align*}
    \Pi_\mathbf{F}(\mu,\eta) := \big\{ \pi \in  \Pi(\mu,\eta) \mid \forall (s,w), \, \pi_{SW}(s,w) = \mathbf{F}_{sw}\big\}
\end{align*}
    with $\pi_{SW}$ denoting the induced coupling on $\mathcal{S}\times \mathcal{W}$ obtained from $\pi$ by marginalizing over $x$ and $y$, that is
    \begin{align*}
        \pi_{SW}(s,w):&=\pi(\mathcal{X}\times\lbrace s\rbrace\times\mathcal{Y}\times \lbrace w\rbrace)\enspace.
    \end{align*}
\end{defn}
In the finite-sample case, where we have access to two datasets $(\mathbf{x}_i,\mathbf{s}_i)_{i=1}^n \in \mathcal{X}\times \mathcal{S}$ and $(\mathbf{y}_i,\mathbf{w}_i)_{i=1}^m \in \mathcal{Y}\times \mathcal{W}$ drawn i.i.d. from  $\mu$ and $\eta$ respectively, the fair optimal transport problem \eqref{eq:fairOTExactFairness} writes
\begin{align}
\begin{aligned}
\label{eq:pure_empirical_fairness}
    & \min\limits_{\mathbf{\Pi} \in \Pi} \textnormal{Tr}\big[\mathbf{\Pi}^\top \mathbf{C}\big] + \varepsilon\kl\big(\mathbf{\Pi}\big)\\
    \textnormal{s.t.} \,&  \forall (s,w), \textnormal{Tr}\big[\mathbf{\Pi}^\top\mathbf{B}_{sw}\big] \!=\!\! \sum\limits_{i | \mathbf{s}_i=s}\sum\limits_{j | \mathbf{w}_j=w} \mathbf{\Pi}_{ij}=  \mathbf{F}_{sw}\enspace
\end{aligned}
\end{align}
where $\mathbf{B}_{sw} := \big(\mathds{1}_{\mathbf{s}_i = s}\mathds{1}_{\mathbf{w}_j = w}\big)_{i,j} \in \{0,1\}^{n \times m}$.

The following proposition ensures that the fair OT problem is well defined.
\begin{boxprop}
\label{prop:uniquess_existence_fair_ot}
    Under Assumption \ref{asm:compact_support}, for any  target $\mathbf{F} \in \Pi(\mathbf{p},\mathbf{q})$, there exists a unique fair OT plan.
\end{boxprop}
\vspace{.1cm}
\begin{remark}[Extension to alternative constraints]
    \looseness=-1 Our framework naturally extends to settings in which fairness targets are imposed only on a subset of group pairs, leaving remaining matching probabilities unconstrained, as well as to cost-weighted fairness targets that, for example, balance transport costs across groups.
\end{remark}
\begin{remark}[From probabilistic to deterministic transport plans]
    Given a fair entropic transport plan, a deterministic matching can be generated by sampling from the coupling. Our empirical results indicate that, for sufficiently large samples, the resulting matching exhibits nearly the same level of fairness (see \cref{fig:empirical_sampling} in \cref{app:sampling_plan}). 
\end{remark}
\subsection{Solving Fair Optimal Transport}
We focus on solving the finite-sample version \eqref{eq:pure_empirical_fairness} of the fair optimal transport problem. Importantly, the fairness constraints are linear in the transport plan, which allows us to leverage the dual formulation and obtain the following result. In what follows, $\odot$ and $\oslash$ denote the elementwise product and division of matrices or vectors, respectively.
\begin{boxprop}
\label{lem:form_fair_discrete_ot}
    There exists $\mathbf{f} \in \mathbb{R}^{n}$, $\mathbf{g} \in \mathbb{R}^{m}$ and $\mathbf{h} = [h_{sw}]_{sw} \in \mathbb{R}^{K_s \times K_w}$ such that the solution to Problem \eqref{eq:pure_empirical_fairness} has the form \begin{align*}
            \mathbf{\Pi} = \textnormal{diag}\big(e^{\mathbf{f}/\varepsilon}\big) \big(\mathbf{K} \odot \mathbf{H}\big)\textnormal{diag}\big(e^{\mathbf{g}/\varepsilon}\big)\enspace,
        \end{align*}
        where 
            $\mathbf{K} := \big[e^{-\mathbf{C}_{ij}/\varepsilon}\big]_{ij} \textnormal{ and } \mathbf{H} := \textstyle\sum_{sw}e^{h_{sw}/\varepsilon}\mathbf{B}_{sw}$.
\end{boxprop}
This result mirrors the foundational insight behind the Sinkhorn algorithm and shows that the fairness constraint can be incorporated via a simple additional projection step. This leads to our modified Sinkhorn algorithm, which we call \texttt{FairSinkhorn}, presented in \cref{alg:fairsinkhorn} in which the function $\Phi$ is defined as
$$
    [\Phi(\mathbf{u},\mathbf{v})]_{sw} := \sum_{i\lvert s_i=s}\sum_{j \lvert w_j = w} \mathbf{u}_i\mathbf{v}_j\mathbf{K}_{i,j} 
$$
The additional steps relative to the original Sinkhorn algorithm are highlighted in blue. We empirically compare the convergence speed of \texttt{FairSinkhorn} and regular \texttt{Sinkhorn} in  \cref{app:cvg_speed} and show that they have qualitatively similar rates.
\begin{algorithm}[t]
\begin{algorithmic}[1]
\STATE \textbf{Inputs:} Cost $\mathbf{C}\in\mathbb{R}^{n\times m}$, $\varepsilon>0$, fairness target $\mathbf{F} = (\mathbf{F}_{sw})_{sw} \in [0,1]^{K_s \times K_w}$, number of iterations $T$, initializations $\mathbf{u}^{(0)}\in\mathbb{R}^n$, $\mathbf{v}^{(0)}\in\mathbb{R}^m$, and $\mathbf{L}^{(0)}\in\mathbb{R}^{K_s\times K_w}$.
\STATE $\mathbf{K} \leftarrow   e^{-\mathbf{C}/\varepsilon-1}$ 
\STATE $\mathbf{T}^{(0)} \leftarrow \sum_{sw} \ell^{(0)}_{sw} \mathbf{B}_{sw}$ with $\ell^{(0)}_{sw} = \big(\mathbf{L}^{(0)}\big)_{sw}$
\FOR{$t=0,\dots,T-1$}
  \STATE $\mathbf{u}^{(t+1)} \leftarrow n^{-1}\mathbf{1}\oslash\Big((\mathbf{K} \textcolor{cyan}{\odot \mathbf{T}^{(t)}})\mathbf{v}^{(t)}\Big)$
  \STATE $\mathbf{v}^{(t+1)} \leftarrow m^{-1}\mathbf{1}\oslash\Big((\mathbf{K} \textcolor{cyan}{\odot \mathbf{T}^{(t)}})^\top \mathbf{u}^{(t+1)}\Big)$
  \STATE \textcolor{cyan}{$\mathbf{L}^{(t+1)} \leftarrow \mathbf{F} \oslash \Phi(\mathbf{u}^{(t+1)},\mathbf{v}^{(t+1)})$} 
\STATE \textcolor{cyan}{$\mathbf{T}^{(t+1)}\! \leftarrow \! \sum_{sw} \ell^{(t+1)}_{sw} \mathbf{B}_{sw}$} with $\ell^{(t+1)}_{sw} \!=\! \big(\mathbf{L}^{(t+1)}\big)_{sw}$
\ENDFOR
\RETURN $\mathbf{\Pi} = \textnormal{diag}\big(\mathbf{u}^{(T)}\big)\big( \mathbf{K}\odot \mathbf{T}^{(T)}\big)\textnormal{diag}\big(\mathbf{v}^{(T)}\big)$
\end{algorithmic}
\caption{\texttt{FairSinkhorn} algorithm to solve \eqref{eq:pure_empirical_fairness}}
\label{alg:fairsinkhorn}
\end{algorithm}    
\section{Two Strategies for Approximately Fair OT}
\texttt{FairSinkhorn} enforces \emph{exact fairness}, which can substantially increase transport cost, sometimes beyond what is acceptable in practice.
This \textit{price of fairness} has been studied in operations research for resource allocation problems \citep{bertsimas2011price} and also arises in supervised learning, where enforcing exact group fairness constraints may sacrifice predictive accuracy \citep{zhao2019inherent}.

Motivated by these observations, in this section we study approximately fair OT formulations, replacing the linear fairness constraints in~\eqref{eq:pure_empirical_fairness} with a relaxed condition that allows slight deviations from the fairness target while still controlling the total fairness violation:
\begin{align}
\label{eq:fairness_pen}
\mathcal{L}_\mathbf{F}(\mathbf{\Pi}) := \sum_{(s,w) \in \mathcal{S}\times \mathcal{W}} \Big(\textnormal{Tr}\big[\mathbf{\Pi}^\top\mathbf{B}_{sw}\big] - \mathbf{F}_{sw}\Big)^2 \leq \rho,
\end{align}
where $\rho \geq 0$ is a tolerance level.
To find OT plans under this relaxed constraint, we propose two strategies.
\begin{remark}[Extension to range-based constraints]
    \looseness=-1 Our framework can also be extended to accommodate range-based fairness constraints, where the fairness target requires the probability of matches between groups to lie within a specified interval. Instead of penalizing the squared difference in \eqref{eq:fairness_pen}, one could for instance use a margin-based loss that only penalizes violations outside the interval.
\end{remark}
\subsection{Fairness-Penalized OT}\label{ssec:penalized_ot}
Our first strategy is a direct penalization approach. We incorporate the $\rho$-relaxed fairness constraint~\eqref{eq:fairness_pen} into the optimal transport problem by introducing a Lagrange multiplier, yielding the penalized objective
\begin{align}\label{eq:fairpenalizedOT-discrete}
     \min_{\mathbf{\Pi} \in \Pi} \textnormal{Tr}\big[\mathbf{\Pi}^\top \mathbf{C}\big] + \varepsilon\kl\big(\mathbf{\Pi}\big) + \lambda \mathcal{L}_\mathbf{F}(\mathbf{\Pi})\enspace,
\end{align}
where $\lambda > 0$ controls the strength of the fairness penalty. Importantly, since $\mathcal{L}_\mathbf{F}$ is convex, the objective remains strongly convex, ensuring a unique minimizer. This penalized entropic optimal transport problem can be solved efficiently using a generalized conditional gradient algorithm \citep{rakotomamonjy2015generalized}. At each iteration, the algorithm linearizes the fairness regularization around the current iterate and solves a subproblem over the feasible set $\Pi$ to find a search direction. A line search then determines a convex combination of the current iterate and this direction, guaranteeing descent. The subproblem at iteration $t$ corresponds to an entropic OT problem with the modified cost~$\mathbf{C} + \nabla\mathcal{L}_{\mathbf{F}}(\mathbf{\Pi}^t)$, which is solved efficiently with the Sinkhorn algorithm. This makes the algorithm computationally practical even for large-scale problems.

\textbf{Sample complexity of fairness-penalized OT.}
We establish a sample complexity bound for the above fairness-penalized optimal transport cost by building upon results from entropic optimal transport \citep{genevay2019sample,mena2019statistical}. To formalize this, note that the optimization problem in \eqref{eq:fairpenalizedOT-discrete} can be defined for arbitrary measures $\mu \in \mathcal{P}(\mathcal{X} \times \mathcal{S})$ and $\eta \in \mathcal{P}(\mathcal{Y} \times \mathcal{W})$ as (see~\Cref{app:sample-complexity} for a formal definition)
\begin{align*}
    m^\star(\mu,\eta)&:=\min_{\pi \in \Pi(\mu, \eta)} \textstyle\int_{\mathbb{R}^d \times \mathbb{R}^d} c(x,y)\,d\pi(x,y) \\
    &\qquad+ \varepsilon\kl\big(\pi|| \mu\otimes \eta\big)+ \lambda \mathcal{L}_\mathbf{F}(\pi)\enspace.
\end{align*}
Let $\mu_n = \sum_{i=1}^n \delta_{x_i}$ and $\eta_n  = \sum_{i=1}^n \delta_{y_i}$ where $(x_i)_{1\leq i\leq n}$ and $(y_i)_{1\leq i\leq n}$ are i.i.d. samples drawn from $\mu$ and $\eta$. In \cref{th:sample_complexity_penalizedOT}, we quantify how well we can estimate $m^\star(\mu,\eta)$ with $m^\star(\mu_n,\eta_n)$ as a function of the sample size $n$.
Similar to \cite{rigollet2022sample} and \cite{genevay2019sample}, our proof relies on the fact that the  measures are compactly supported (Assumption~\ref{asm:compact_support}). We also require the following assumption on the ground cost function, which is standard in the literature \citep{genevay2019sample}.
\begin{boxasm}
    \label{asm:lipschitz_smooth_cost}
    The cost function $c:\mathcal{X} \times \mathcal{Y} \to \mathbb{R}_+$ is infinitely differentiable. 
\end{boxasm} 
\begin{boxthm}\label{th:sample_complexity_penalizedOT}
     Assume for simplicity that $m=n$. Under Assumptions \ref{asm:compact_support} and \ref{asm:lipschitz_smooth_cost}, we have
    \begin{align*}
        \mathbb{E}\big\lvert m^\star(\mu_n,\eta_n)-m^\star(\mu,\eta)\big\lvert \lesssim \frac{1}{\sqrt{n}}\enspace.
    \end{align*}
\end{boxthm}
Our result matches the $O(n^{-1/2})$ scaling of sample complexity bounds for standard entropic OT~\citep{genevay2019sample,mena2019statistical}, showing that adding the fairness penalty does not reduce statistical efficiency. Furthermore, our bound inherits exponential dependency in $\varepsilon$ from \citet{genevay2019entropy} and in $\lambda$ (we provide more details in~\Cref{app:sample-complexity}). 
Interestingly, the result can be extended to \emph{generic convex penalties} at the cost of a supplementary $\log(n)$ factor.
\begin{proof}[Proof sketch for \Cref{th:sample_complexity_penalizedOT}.]
Our proof works by finding two adequate random variables $Y_n$ and $Z_n$ such that
\begin{align*}
    Y_n\!- \!m^\star(\mu,\eta)&\!\leq\! m^\star(\mu_n,\eta_n)\!-\!m^\star(\mu,\eta)
    \!\leq\! Z_n\!-\!m^\star(\mu,\eta). 
\end{align*}
\textit{Lower bound.} We obtain $Y_n$ through linearization of the convex penalty, which is guaranteed to yield a proper lower bound through the connection between the optimality conditions of our penalized problem and its linearized counterpart \citep{rakotomamonjy2015generalized}. This linearized problem can be cast as an entropic OT problem with modified cost, allowing us to leverage existing sample complexity results \citep{genevay2019entropy,mena2019statistical}. 

\textit{Upper bound.} To obtain $Z_n$ we evaluate the fairness-penalized loss at the minimizer $\hat{\pi}_n^\star$ of the linearized problem, that is, we consider
\begin{align}\label{eq:proofSketchUpperbound}
    m^\star(\mu_n,\eta_n)-m^\star(\mu,\eta)&\nonumber\\&\hspace{-10em}\leq \!\langle c,\hat{\pi}_n^\star\rangle\!+\!\varepsilon\textbf{KL}(\hat \pi_n^\star| \mu_n\!\otimes\! \eta_n)\!+\!\lambda\mathcal{L}_F(\hat\pi_n^\star)\!-\!m^\star(\mu,\eta),
\end{align}
where the upper bound follows from the fact that $\hat{\pi}_n^\star$ satisfies the constraints defining $m^\star(\mu_n,\eta_n)$. To establish that the expectation of the absolute value of \eqref{eq:proofSketchUpperbound} vanishes at the rate of $1/\sqrt{n}$, we again make use of the connection between the optimality conditions of our penalized problem and its linearized counterpart \citep{rakotomamonjy2015generalized} to reduce the problem to the sample complexity of entropic OT. We handle the part $\lambda\mathcal{L}_F(\hat\pi_n^\star)-m^\star(\mu,\eta)$ through a generalization of Theorem~6 from \citep{rigollet2022sample}.
\end{proof}
\subsection{Fair OT via Cost Learning}
The key idea behind our second strategy is to learn a ground cost function $c_\theta$ so that the entropic OT solution $\mathbf{\Pi}(c_\theta)$ it induces minimizes the fairness violation. In other words, rather than modifying the transport plan directly, we reshape the geometry of the OT problem.
This perspective leads naturally to a bilevel optimization problem, with an outer level that adjusts the cost parameters and an inner level that solves an OT problem whose solution depends on those parameters.
Formally, we consider
\begin{align}\label{eq:cost_learning} 
    \begin{aligned}
    &\min\limits_{\theta \in \Theta} \mathcal{L}_\mathbf{F}\big(\mathbf{\Pi}_\varepsilon(c_\theta)\big) + \frac{1}{\lambda} \mathscr{D}(c_\theta,c_\textnormal{base}) \\
    &\textnormal{s.t.} \,\, \mathbf{\Pi}_\varepsilon(c_\theta) = \argmin\limits_{\mathbf{\Pi}\in \Pi} \textnormal{Tr}\big[\mathbf{\Pi}^\top \mathbf{C}_\theta\big] + \varepsilon\kl\big(\mathbf{\Pi}\big)\enspace,
    \end{aligned}
\end{align}
where $(c_\theta)_{\theta\in\Theta}$ is a parameterized family of cost functions, $\mathbf{C}_\theta := \big[c_\theta(\mathbf{x}_i,\mathbf{y}_j)\big]_{ij}$, $c_\textnormal{base}:\mathcal{X}\times \mathcal{Y} \to \mathbb{R}^+$ is a baseline cost encoding prior knowledge or domain-specific structure, and $\mathscr{D}$ is a discrepancy measure that encourages the learned cost to remain close to the baseline, thereby controlling deviations from the original OT problem.
\begin{remark}
    This formulation is related to prior work on cost learning in optimal transport \cite{JMLR:v20:18-700,andrade2023sparsistency}, which aims to infer a cost function that explains observed matchings or couplings. Here, however, the objective is different: instead of fitting a cost to reproduce a given transport plan, we learn a cost whose induced OT solution optimizes a downstream criterion, namely fairness.
\end{remark}
Problem \eqref{eq:cost_learning} is well posed since the inner entropic OT problem is strongly convex over $\Pi$, yielding a unique solution $\mathbf{\Pi}_\varepsilon(c_\theta)$ for any $\theta$ and thus a well-defined bilevel objective.
This bilevel problem can be optimized using gradient-based methods by differentiating through an iterative solver (iterative differentiation) \citep{Bolte2023one_step_diff,Franceschi2017ForwardReverse,Maclaurin2015,Pauwels2023} or using implicit differentiation to approximate the gradient of the bilevel objective \citep{Dagreou2022SABA,Dagreou2024SRBA,Eisenberger2022,Ghadimi2018}. 

Our approach can accommodate a wide range of parameterizations for $c_{\theta}$.
\begin{ex}[Mahalanobis cost]
\label{ex:maha}
     Assume $\mathcal{X}=\mathcal{Y}=\mathbb{R}^d$. For a PSD matrix $\mathbf{M}$, the Mahalanobis cost $c_{\mathbf{M}}$ is defined by $c_{\mathbf{M}}(x,y) = (x - y)^\top\mathbf{M}(x - y)$. This corresponds to applying a linear transformation to $x$ and $y$ before computing the squared Euclidean distance. A key advantage of this parametrization is its interpretability: the learned matrix $\mathbf{M}$ directly reveals which directions or features in the data are emphasized or downweighted in the transport cost.
\end{ex}
\begin{ex}[Neural cost]
\label{ex:neural}
A more flexible parametrization uses neural networks $\phi_{\theta_1}:\mathcal{X} \to \mathbb{R}^k$ and $\phi_{\theta_2}:\mathcal{Y} \to \mathbb{R}^k$ to embed the inputs, defining the cost as $c_{\theta}(x,y) = \lVert\phi_{\theta_1}(x) - \phi_{\theta_2}(y)\rVert^2_2$.
The networks can be trained from scratch or initialized from a pretrained model and fine-tuned for the specific OT task, enabling highly expressive cost functions that capture complex relationships between $x$ and $y$.
\end{ex}
\textbf{Fairness on unseen samples.} A key advantage of our cost learning approach is that it produces a reusable cost function, which can be directly applied to match new samples. This naturally raises the question of whether transport plans obtained by solving an optimal transport problem with the learned cost on previously unseen data will maintain a comparable level of fairness.

To address this question, we derive a pointwise deviation bound on the fairness of the transport plan computed from finite samples relative to the population solution, uniformly over the parameterized cost family. Let $\pi_\varepsilon^\star(c_\theta)$ denote the OT plan on two measures $\mu$ and $\eta$. Our goal is to control the difference between the fairness of this plan, defined as
\begin{align*}
    \mathcal{L}_\mathbf{F}\big(\pi_\varepsilon^\star(c_\theta)\big)& \\
    \hspace{-10em}:= \sum\limits_{s=1}^{K_s}\sum\limits_{w=1}^{K_w}&\Big(\int_{\mathcal{X}\times\lbrace s\rbrace\times\mathcal{Y}\times \lbrace w\rbrace}d\pi_\varepsilon^\star(c_\theta)(x,u,y,t) - \mathbf{F}_{sv}\Big)^2,
\end{align*}
and the fairness $\mathcal{L}_\mathbf{F}\big(\mathbf{\Pi}(c_\theta)\big)$ of the optimal transport plan $\mathbf{\Pi}(c_\theta)$ computed on finite-sample versions $\mu_n$ and $\eta_m$. We will require the following assumption to hold.
\begin{boxasm}
\label{asm:bounded_cost}
    There exists a constant $R_\Theta > 0$ s.t.
    \begin{align*}
        \sup\limits_{\theta \in \Theta }\,\, \norm{c_\theta}_\infty :=\sup\limits_{\theta \in \Theta } ~~ \max\limits_{x \in \textnormal{supp}(\mu),y \in \textnormal{supp}(\eta)} c_\theta(x,y)  < R_\Theta.
    \end{align*}
\end{boxasm}
This assumption is mildly restrictive. For instance, it holds if $\Theta$ is bounded and the function $(\theta,x,y)\mapsto c_\theta(x, y)$ is continuous on $\Theta\times\calX\times\calY$. For instance, for the Mahalanobis cost function (Example~\ref{ex:maha}), the parameter space $\Theta$ is a subset of the space of positive definite matrices and we have 
\begin{align*}
    R_\Theta = \max\limits_{\mathbf{M} \in \Theta}\norm{\mathbf{M}}_\textnormal{F} \max\limits_{x \in \textnormal{supp}(\mu), y \in \textnormal{supp}(\eta)} \norm{x-y}_2
\end{align*}
which is finite under Assumption \ref{asm:compact_support} as long $\Theta$ is bounded. Similarly, for the neural cost function (Example~\ref{ex:neural}), provided that the neural networks are $C_\Theta$-Lipschitz, we have 
\begin{align*}
    R_\Theta = C_\Theta \max\limits_{x \in \textnormal{supp}(\mu), y \in \textnormal{supp}(\eta)} \norm{x-y}_2\enspace.
\end{align*}
\begin{boxthm} 
\label{thm:subgaussian_fluctuations}
    Assume for simplicity that $n=m$. Under Assumptions \ref{asm:compact_support} and \ref{asm:bounded_cost}, we have
        \begin{align*}
        \sup\limits_{\theta \in \Theta }\bbE\big[\big\lvert \mathcal{L}_\mathbf{F}\big(\mathbf{\Pi}_\epsilon(c_\theta)\big) \!-\! \mathcal{L}_\mathbf{F}\big(\pi^\star_\epsilon(c_\theta)\big)\big\lvert\big]\! \lesssim\! \frac{\exp\big(\frac{5R_\Theta}{\varepsilon}\big) }{\sqrt{n}}\enspace.
    \end{align*}
\end{boxthm}
\begin{proof}[Proof sketch for \Cref{thm:subgaussian_fluctuations}]
    We generalize \citet[Theorem 6]{rigollet2022sample} to any bounded cost $c_\theta$ by proving that for any test function $\varphi\in L^\infty(\mu\otimes\eta)$ and for any $t>0$, we have with probability $1-18e^{-t^2}$
    \begin{equation*}\label{eq:sample_complexity}
        \big\lvert \!\textstyle\int\!\varphi d(\mathbf{\Pi}_\epsilon(c_\theta)-\pi^\star_\epsilon(c_\theta))\big\rvert \lesssim \displaystyle\frac{\exp\big(5R_\Theta \epsilon^{-1}\big)\lVert\varphi\rVert_{L^\infty(\mu\otimes \eta)}\cdot t}{\sqrt{n}}
    \end{equation*}
    which yield the following bound in expectation
    \begin{equation*}
        \bbE\big[\big\lvert \textstyle\int\varphi d(\mathbf{\Pi}_\epsilon(c_\theta)-\pi^\star_\epsilon(c_\theta))\big\rvert\big] \lesssim \frac{\exp(5R_\Theta \epsilon^{-1})\lVert\varphi\rVert_{L^\infty(\mu\otimes \eta)}}{\sqrt{n}}.
    \end{equation*}
    Then, using $\lvert a^2-b^2\rvert\leq\lvert a+b\rvert\lvert a-b\rvert$, we show that
    \begin{align*}
        \lvert\mathcal{L}_{\mathbf{F}}(\mathbf{\Pi}_\epsilon(c_\theta)) \!-\! \mathcal{L}_{\mathbf{F}}(\pi^\star_\epsilon(c_\theta)) \rvert \!\lesssim\!\sum_{s=1}^{K_s}\sum_{w=1}^{K_w} \big\lvert\textnormal{Tr}\big(\mathbf{\Pi}_\epsilon(c_\theta)^\top\mathbf{B}_{uv}\big)&\\&\hspace{-16em}- \textstyle\int_{\mathcal{X}\times\lbrace s\rbrace\times\mathcal{Y}\times \lbrace w\rbrace}d\pi^\star_\epsilon(c_\theta)(x,u,y,t) \big\lvert.
    \end{align*}
    Then, for a suitable choice of $\varphi_{sw}\in L^\infty(\mu\otimes\eta)$, we have for any $s,w\in[K_s]\times[K_w]$
    \begin{align*}
        \big\lvert\textnormal{Tr}\big(\mathbf{\Pi}_\epsilon(c_\theta)^\top\mathbf{B}_{uv}\big)- \textstyle\int_{\mathcal{X}\times\lbrace s\rbrace\times\mathcal{Y}\times \lbrace w\rbrace}d\pi^\star_\epsilon(c_\theta)(x,u,y,t) \big\lvert &\\&\hspace{-12.9em}= \big\lvert\textstyle\int\varphi_{sw} d(\mathbf{\Pi}_\epsilon(c_\theta)-\pi^\star_\epsilon(c_\theta))\big\rvert.
    \end{align*}
    Putting everything together yields the result.
\end{proof}
This bound provides a uniform control over the expected difference between the fairness loss evaluated on the population and its finite-sample approximation.
The exponential dependence on $\varepsilon^{-1}$ is unavoidable in the worst case; see \citet{rigollet2022sample} and \citet{altschuler2022asymptotics} for related discussions.
Importantly, \Cref{thm:subgaussian_fluctuations} implies that, for two sufficiently large samples, the transport plan computed with a given cost achieves a comparable level of fairness. In other words, the same cost function can be reused to match new samples with fairness guarantees.
\subsection{Comparison of Both Approaches}
\label{sec:comparison}
\textbf{Convexity.} The penalized OT problem is convex, whereas the cost learning approach involves a bilevel optimization, which is non-convex and may get stuck in local minima.

\textbf{Flexibility.} The cost learning approach restricts the transport map to entropic-regularized solutions with a parametric cost. This problem is thus structurally more constrained than the penalized formulation, which allows solutions outside the set of optimal couplings and is therefore more flexible.

\textbf{Reusability and interpretability.} Penalized OT requires solving a new problem for each batch of samples. In contrast, a learned cost can be reused to match new samples via a standard OT problem. Moreover, when the cost belongs to an interpretable family, it can yield insight into how specific features contribute to (un)fairness.
\section{Experiments}
\subsection{Synthetic Experiments}
\subsubsection{Experimental Setup}\label{ssec:approaches}
\textbf{Data generation.} Building on the illustrating example of segregated school systems (Examples~\ref{ex:1}--\ref{ex:2}), we consider two synthetic problems. For the first one (\cref{fig:data}, left), the positions of students ($X$) and schools ($Y$) are drawn according to a mixture of two Gaussian distributions such that privileged students ($S=1$) are closer to elite schools ($W=1$), and underprivileged students closer to non-elite schools. For the second one (\cref{fig:data}, right), privileged students and elite schools are drawn from a centered Gaussian distribution, while underprivileged students and non-elite schools are sampled on a centered circle of radius 2. In both cases, there are 10 times more schools than students, and sensitive attributes are balanced.

\begin{figure}[h]
    \centering
    \includegraphics[width=.8\linewidth]{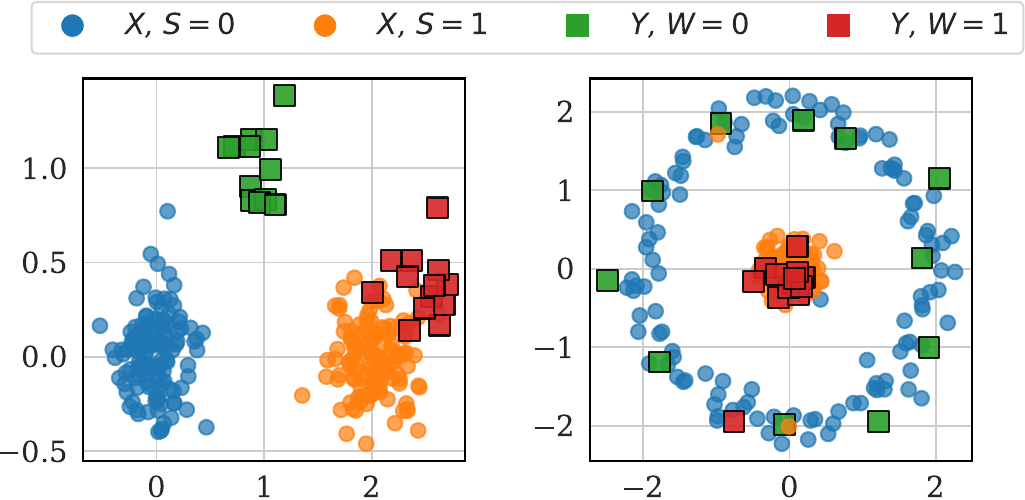}
    \caption{Simulated datasets: \emph{Gaussians} (left) and \emph{Circles} (right).}
    \label{fig:data}
\end{figure}

\textbf{Fairness target.} We aim to assign approximately $60\%$ of underprivileged students to elite schools, which corresponds to the fairness target
$$\mathbf{F} = \begin{bmatrix} 0.20 & 0.30 \\ 0.28 & 0.22 \end{bmatrix}.$$
\looseness=-1

\textbf{Approaches.} We compare our three approaches: \fairsinkhorn, penalized OT and cost learning. For cost learning, we evaluate two parameterizations: a Mahalanobis cost (\Cref{ex:maha}) and a neural cost (\Cref{ex:neural}), where $\phi_{\theta_1}$ and $\phi_{\theta_2}$ are multilayer perceptrons (MLP). Implementation details are available in \cref{sec:impl_details}. We also include vanilla entropic OT with varying $\varepsilon$ as a baseline.

\textbf{Metrics.} Following standard practice in fairness research, we analyze the cost–fairness trade-off by visualizing performance in the two-dimensional (\textit{cost}, \textit{fairness}) plane. Cost is measured relative to the transport plan obtained by standard (non-fair) OT with the same entropic regularization, using the difference in total transport cost.

\begin{figure*}
    \includegraphics[width=\linewidth]{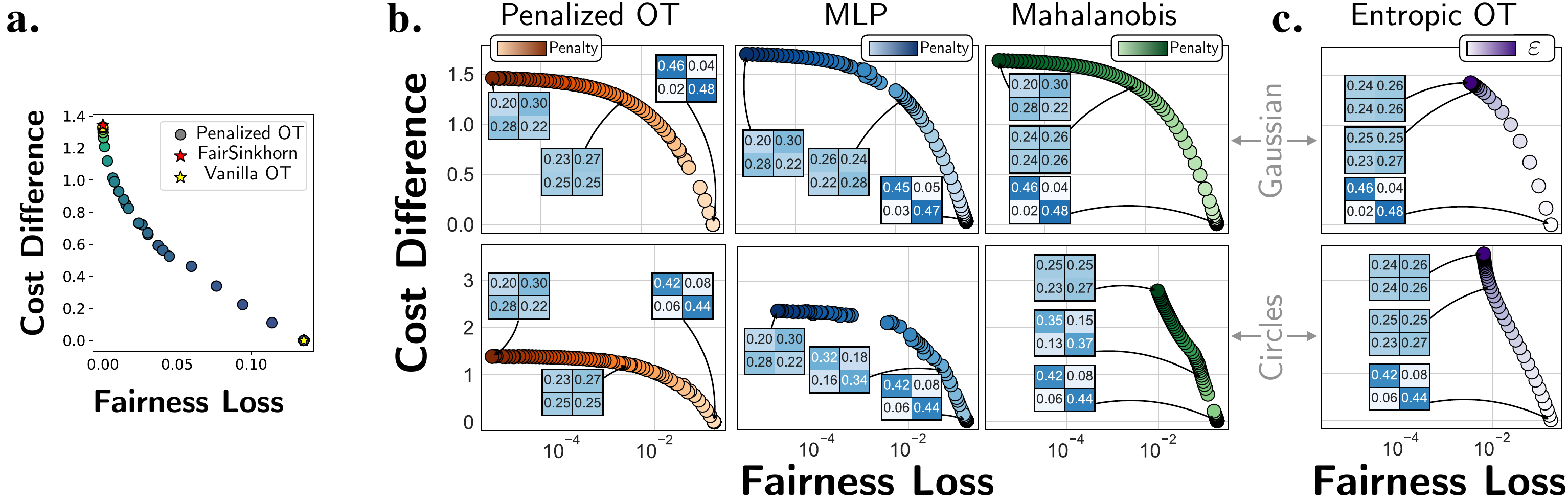}
    \caption{\textbf{a.} \fairsinkhorn achieves perfect fairness with a high transport cost while \sinkhorn achieves low transport cost with low fairness. The penalized OT interpolates between \sinkhorn and \fairsinkhorn.
    \textbf{b.} Cost-fairness trade-off of our penalized and cost-learning approaches on both datasets for varying fairness penalties. \textbf{c.} Vanilla entropic OT with different values of $\varepsilon$ is included as a baseline. We also display the fairness targets reached at a few selected points.}
    \label{fig:big_fig}
\end{figure*}

\subsubsection{Comparing Exact and Penalized Approaches}
To illustrate the need to move from exact group fairness constraints to relaxed ones, we compare, on the Gaussian dataset, the (cost,fairness) trade-off achieved by the original \sinkhorn algorithm, our \fairsinkhorn algorithm (\cref{ssec:fairsinkhorn}), and our fairness-penalized approach (\cref{ssec:penalized_ot}) for varying values of the fairness penalty. As shown in \cref{fig:big_fig} a., our penalized approach smoothly interpolates between \sinkhorn, which attains low transport cost but large fairness violations, and \fairsinkhorn, which enforces perfect fairness at the expense of higher transport cost. This relaxed formulation thus enables fine-grained control over the fairness-cost trade-off.\looseness=-1
\subsubsection{Comparing Penalized OT and Cost Learning}\label{ssec:simulated_data}
\cref{fig:big_fig} b. compares our two relaxed approaches across our two datasets, allowing us to highlight the effect of data geometry. 
For the Gaussian dataset, the cost-fairness trade-off of penalized OT, cost learning with a Mahalanobis metric, and cost learning with an MLP are similar. On the Circles dataset, however, the penalized approach achieves a superior trade-off, reaching any level of fairness at lower cost than the cost learning methods. This is expected, as penalized OT imposes no restrictions on the transport plans (see Section~\ref{sec:comparison}). Moreover, cost learning with the Mahalanobis parameterization cannot reduce fairness loss below $10^{-2}$, whereas the MLP variant is flexible enough to transform the data nonlinearly, which is required to closely match the fairness target on this problem. Finally, for both datasets, increasing the entropic penalty $\varepsilon$ in vanilla OT fails to achieve the desired fairness, highlighting the relevance and effectiveness of our approaches.
\subsubsection{Assessing the Reusability of Learned Costs}
A key advantage of our cost learning approach is that, once the cost function is learned, it can be applied to match new samples using vanilla OT without retraining. In contrast, the penalized optimal transport plan cannot be reused, requiring a new penalized OT problem to be solved for each incoming batch of data. 

In this section, we quantify the benefits of reusing the learned cost on new samples in terms of \emph{inference time} and \emph{fairness generalization}.
To this end, we first learn a Mahalanobis cost and an MLP-based cost on the Gaussian problem, using a training dataset of $1000$ samples from $X$ and $100$ samples from $Y$. We subsequently draw test datasets of $500$ new samples from $X$ and $50$ new samples from $Y$, solve the vanilla entropic OT problem using the learned costs, and record both the inference time and fairness loss of the resulting transport plan for each test dataset. For comparison, we also report the inference time of penalized OT and the fairness of vanilla OT on the same test data. Results are shown in Figure \ref{fig:recycling}.

We observe that using the learned cost allows matching new samples with significantly lower inference time than the penalized approach since the latter requires solving the full penalized optimal transport problem each time new samples are added to the dataset.
Furthermore, fairness levels on new samples remain close to those achieved during training. While a generalization gap is present—larger for the MLP cost than for the Mahalanobis cost, as expected—this demonstrates that the learned costs generalize to new samples and achieve substantially better fairness than vanilla OT.\looseness=-1

\subsection{Semi-Synthetic Dating App Experiment}
We conduct further experiments on a semi-synthetic dating app dataset obtained from Kaggle.\footnote{{\href{https://www.kaggle.com/datasets/keyushnisar/dating-app-behavior-dataset/}{Kaggle Dating App Behavior Dataset}}} To enhance the realism of the dataset, we subsample it so that the joint distribution of gender, education level and income matches US demographics. We construct a feasible matching matrix based on reported sexual orientation (SO), excluding some gender-SO combinations for which we lacked sufficient knowledge to reliably determine potential matches. All details on pre-processing can be found in Appendix \ref{sec:dating_app_details}. In this experiment, the sensitive attribute is income, divided into seven levels ranging from very low to very high. The fairness target is defined as follows: the $(i,j)$ entry is given by $\mathbb{P}(S = s_i)\times \mathbb{P}(W = w_j)$, where $S$ and $W$ are random variables representing the sensitive attribute of individuals from $\mathcal{X}$ and $\mathcal{Y}$, taking values in $\{s_1, \dots, s_7\}$ and $\{w_1, \dots, w_7\}$, respectively. The resulting fairness target and associated cost-fairness trade-offs are shown in Figure \ref{fig:dating_app}. This experiment extends our previous findings from the synthetic setting (two groups, two-dimensional data) to a high-dimensional setting with multiple groups. In particular, we observe the same convex-shaped trade-off curve, together with nearly identical performance for both methods.\looseness=-1

\begin{figure}
    \centering             \includegraphics[width=\columnwidth]{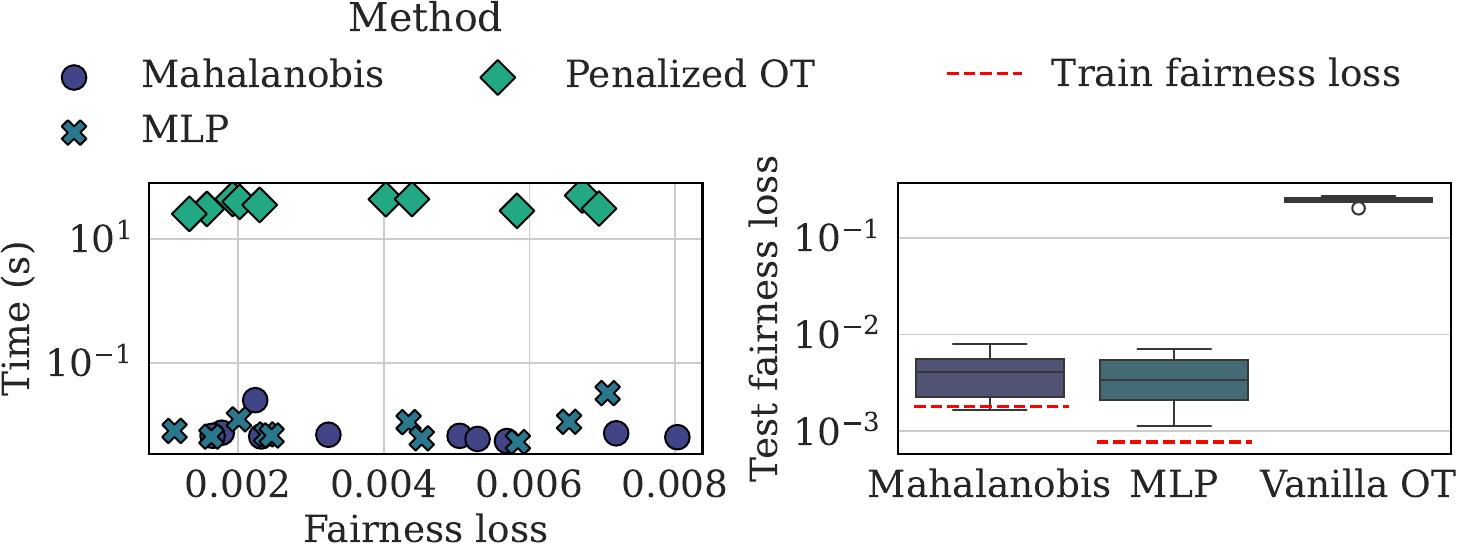}
    \caption{\textbf{Left:} Inference time of penalized OT and cost-learning approaches, highlighting the much faster inference of the cost-learning methods once the cost function is learned.
\textbf{Right:} Fairness levels achieved on new samples using the learned cost function, with vanilla OT shown as a baseline. Leveraging the learned cost allows new samples to attain fairness comparable to that observed during training.}
    \label{fig:recycling}
\end{figure}

\begin{figure}
    \centering    \includegraphics[width=\linewidth]{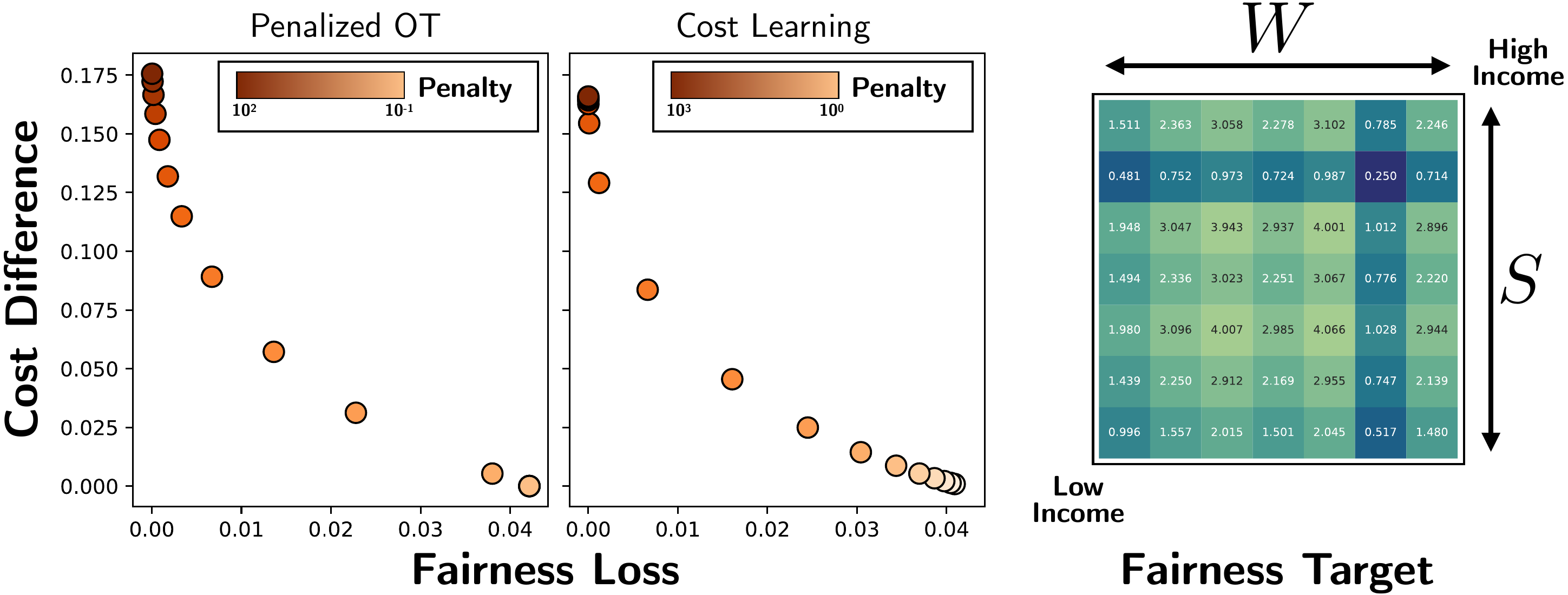}
    \caption{Comparison of penalized OT (\textbf{left}) and cost learning (\textbf{center}) on the semi-synthetic dating app dataset with the fairness target specified on the \textbf{right}.}
    \label{fig:dating_app}
\end{figure}

\section{Conclusion}
We introduced a novel notion of group fairness for optimal transport and propose an algorithm that computes entropic regularized transport with perfect fairness. To improve the cost–fairness trade-off, we developed two relaxed approaches: (i) a convex penalization of the entropic OT objective, and (ii) a cost-learning-based method. Both approaches are supported by theoretical results. 
We hope our work will foster further research on fairness constraints in optimal transport.
\looseness=-1
\paragraph{Limitations \& future work.} Our work is limited to group fairness, and does not readily extend to alternative notions of fairness such as individual fairness. We believe this is a valuable direction for future work. While our theoretical contributions characterize the sample complexity and deviation probability of our approaches, a throughout analysis of the fairness-bias trade-off is still an open question. Lastly, it would also be interesting to model and evaluate the effects of a matching on downstream individual outcome (i.e. how does a matching of students to universities affect the expected salary for students).  
\paragraph{Acknowledgments.} This work was partially conducted while LB was a member of Inria Montpellier. LB's work at EPFL is supported by an EPFL AI Center Postdoctoral Fellowship. Parts of the experiments for this article were run on the RCP cluster at EPFL. LB thanks Quentin Berthet and Hugo Subtil for insightful discussions. 
FA's work was partially supported by a French government grant managed by the Agence Nationale de la Recherche under the France 2030 program, reference SMATCH ANR-22-PESN-0003.
LB and AB's work was partially supported by a French government grant managed by the Agence Nationale de la Recherche under the France 2030 program, reference SSF-ML-DH ANR-22-PESN-0014.
We thank the anonymous reviewer and all participants of the EurIPS workshop \textit{Unifying Perspectives on Learning Biases} whose comments helped improving our work, and the anonymous ICML reviewers.\looseness=-1
\section*{Impact Statement}
This paper presents a novel way to enforce fairness constraints in OT-based matchings. Matchings are omnipresent in modern societies and govern many aspects of individual lives, such as romantic encounters, employment, housing and medical treatment. Our fairness criterion allows regulators and market designers to enforce fairness targets in a principled manner. We see two main possible impacts of our work. First, we hope that our work will help decision-makers implement more fair matching mechanisms. Secondly, our definition of fairness in matchings might be used as a criterion to analyze existing matching mechanisms and audit them. We stress that the design of the fairness target remains a political, philosophical and ethical decision; our results do not prescribe how such targets should be defined. We also highlight that OT serves as an abstraction for matching problems in our setup, and that it does not fully convey the complexity of real-world matchmaking.  
\bibliography{references}
\bibliographystyle{icml2026}
\newpage
\appendix
\onecolumn
\newpage
\appendix
\section{Extended Related Work}
\label{appendix:related_work}
\subsection{Fairness in Matching Mechanisms} Since the pioneering work of \citet{gale1962college} on the stable matching problem, a series of works have provided theoretical studies of the fairness in matching mechanisms. A first line of research focuses on the stability of GS-like algorithms under constraints akin to individual fairness, ensuring that similar individuals receive similar outcomes by the decision-making process. For instance, \citet{karni2021fairness} analyze the Gale-Shapley (GS) algorithm through the lens of \textit{preference informed individual fairness}. This criterion allows for small deviations from individual fairness, which requires that similar individuals should be treated similarly. They devise a fair variant of the GS algorithm that ensure both approximately fair and stable matchings. Similarly, \citet{devic2023fairness} analyze merit-based fairness of one-to-one matching mechanisms, a notion close to individual fairness, under partial uncertainty over individual's merit. A second line of research introduces metrics of fairness closer to group fairness, with a specific focus on kidney paired donation programs, a singular matching problem subject to domain specific constraints --- in this setting, donors and patients must be matched by forming cycles due to legal constraints, which leads to a reformulation of the matching problem as a constrained integer programming problem. In their seminal work, \citet{ashlagi2014free} unveil an unfair market collapse phenomenon in kidney pair donation programs: instead of introducing all patients to the matching system, hospitals internally operate easy matches i.e. do not participate fully in the exchange mechanism. This leads to a loss in transplantation opportunities for highly-sensitized patients that are hard to match. \citet{dickerson2014price} and following work by \citet{zhang2025fairness} also consider the case of kidney paired donations programs. Their definition of fairness enforces a minimal amount of program participation for the group of highly-sensitized patients. More recently, \citet{lobo2025fair} consider a combinatorial multimatching problem under uncertainty: in their setup, the goal is to match agents and resources without exact knowledge of their pairwise valuations. Their definition of fairness relies on the average utility within predefined groups, which is closest to our definition among previous work. 

Our contribution differentiates from these works in two aspects. First, we do not consider iterative matching mechanisms that operate at the individual level such as the GS algorithm: our work focuses on matching through (entropic) optimal transport. Secondly, our definition of fairness is different from previously introduced definitions, and highly modular to accommodate fairness preferences of a central planner. Instead of requiring individual fairness, building on individual rankings over possible assignements or participation constraints, we consider a matching problem between two populations (f.i. students and universities) further structured by a sensitive attribute (f.i. privileged and unprivileged students, and elite and non-elite universities). We enforce matching quotas given by a central planner, that specify for each subgroup the proportion of individuals that should be matched to subgroups in the second population.  

\subsection{Fairness in Bipartite Graph Matching}
A closely related topic to our working is bipartite matching in graphs \citep{godsil1981matchings,zdeborova2006number,noiry2021online}. Given a graph $\big(\mathcal{U}\cup \mathcal{V},\mathcal{E}\big)$ made of two sets of vertices $\mathcal{U}$ and $\mathcal{V}$ and edges $\mathcal{E}=(e_{ij})_{i \in \mathcal{U},j\in\mathcal{V}}$, the goal is to select a one-to-one matching between vertices in $\mathcal{U}$ and vertices in $\mathcal{V}$, under the constraint that two vertices can only be matched if they are linked by an edge in $\mathcal{E}$. Some works have considered fairness constraints in this setup. For instance, \citet{castera2026price} consider this problem with ad-hoc partitions of one side of the graph in disjoint groups, and study the maximal number of individuals that may be matched per group. This is similar to definitions of fairness through participation ratios discussed earlier in the work of \citet{zhang2025fairness}. Similarly, \citet{panda2024individual} study a doubly constrained setup in which the set of items $\mathcal{U}$ is structured through groups. They enforce both individual-fairness and group-fairness inspired constraints. In their work, $(a)$ individuals in $\mathcal{U}$ have preferences over individuals in $\mathcal V$, and $b)$ every individual in $\mathcal{V}$ can only accept an certain number of individuals from each subgroup in $\mathcal{U}$.

Bipartite matching is hence fundamentally different from our setting and somewhat closer to the literature on fair division \citep{steinhaus1949division,weller1985fair, budish2011combinatorial} since $a)$ we allow for mass splitting between individuals on both sides ; $b)$ individuals are characterized by features in our setting, and not by their relative position within a graph $\mathcal{E}$ ; $c)$ our problem is symmetric in the sens that the two matched measures play the same role. 
\subsection{Fairness and Optimal Transport} Recent work has drawn many fruitful connection between fairness and optimal transport, primarily by using OT as a tool to obtain or characterize fair learning algorithms \citep{gordaliza2019obtaining,gouic2020projection,chzhen2020fair,DBLP:conf/aaai/ChiappaJSPJA20,DBLP:conf/nips/BuylB22,hu2023fairness,DBLP:conf/icml/XianY023,chowdhary2024fairness,xiong2024fair,divol2024demographic}. A central insight of this line of work is that learning an optimal fair predictor can be formulated as a Wasserstein barycenter problem \citep{gouic2020projection,chzhen2020fair,divol2024demographic}. This perspective is orthogonal to ours: OT serves as an auxiliary mechanism for analyzing and enforcing fairness in downstream prediction tasks, whereas our goal is to study and enforce fairness of the optimal transport itself.

Closer in spirit to our work is the recent article by \citet{nguyen2024towards}, which imposes fairness constraints directly on a Wasserstein barycenter by requiring approximate equality of sliced-Wasserstein distances \textbf{SW} \citep{nadjahi2020statistical} to multiple marginals $\mu_1,\dots,\mu_K$.
\begin{align*}
\min\limits_{\mu } \sum\limits_{k=1}^K \textnormal{\textbf{SW}}(\mu,\mu_k) \textnormal{ s.t }  \sum\limits_{i=1}^K\sum\limits_{j=1}^K \big\lvert \textnormal{\textbf{SW}}(\mu,\mu_i) - \textnormal{\textbf{SW}}(\mu,\mu_j)\big\lvert \leq \varepsilon.
\end{align*}
 where $\textnormal{\textbf{SW}}(\cdot,\cdot)$ is the sliced-Wasserstein distance between distributions \citep{nadjahi2020statistical} and $\varepsilon>0$ is a tolerance threshold. Similarly to their work, we enforce additional constraints on the optimal transport to match it to a definition of fairness based on subgroups. However, we study the problem of computing transport plans that satisfy explicit mass constraints between groups defined by sensitive attributes, a setting that, to the best of our knowledge, has not been previously considered. 
\subsection{Constrained and Statistical Optimal Transport} Our work borrows from recent developments in constrained optimal transport, which seeks to enforce structural properties on the transport plan \citep{courty2016optimal,blondel2018smooth,paty2019subspace,liu2022sparsity}; for instance, \citet{korman2015optimal} analyze a variant of optimal transport in which the amount of mass that can be transported between two units is upper bounded. Specifically, we build on  
\citet{rakotomamonjy2015generalized}, \citet{genevay2019sample} and \citet{rigollet2022sample} to derive finite sample guarantees for a penalized optimal transport problem, and to bound the deviations of fairness with learned cost functions.
\section{A Modified Sinkhorn Algorithm for Fair Optimal Transport}
\label{section:fair_sinkhorn}
The following is an immediate consequence of first order conditions on the Lagrangian of Problem \ref{eq:pure_empirical_fairness}.
\begin{boxprop}
    There exists $\mathbf{f} \in \mathbb{R}^{n}$, $\mathbf{g} \in \mathbb{R}^{m}$ and $\mathbf{h} = [h_{sw}]_{sw} \in \mathbb{R}^{K_s \times K_w}$ such that the solution to Problem \ref{eq:pure_empirical_fairness} has the form \begin{align*}
            \mathbf{\Pi} = \textnormal{diag}\big(e^{\mathbf{f}/\varepsilon}\big) \big(\mathbf{K} \odot \mathbf{H}\big)\textnormal{diag}\big(e^{\mathbf{g}/\varepsilon}\big)
        \end{align*}
        where 
        \begin{align*}
            \mathbf{K} := \big[e^{-\mathbf{C}_{ij}/\varepsilon}\big]_{ij} \textnormal{ and } \mathbf{H} := \sum_{sw}e^{h_{sw}/\varepsilon}\mathbf{B}_{sw}.
        \end{align*}
\end{boxprop}
\begin{proof}
        Introducing dual variables $\mathbf{f} \in \mathbb{R}^{n}, \mathbf{g}\in \mathbb{R}^m $ and $\mathbf{H} =(h_{sw})_{sw}\in \mathbb{R}^{K_s \times K_w }$, the Lagrangian writes 
        \begin{align*}
            \mathcal{E}\big(\mathbf{\Pi},\mathbf{f},\mathbf{g},\mathbf{h}\big) = &\textnormal{Tr}[\mathbf{\Pi}^\top \mathbf{C}] + \varepsilon \kl(\mathbf{\Pi}) - \mathbf{f}^\top(\mathbf{\Pi}\mathds{1}_{m} - \mathbf{a}) - \mathbf{g}^\top(\mathbf{\Pi}^\top\mathds{1}_{n} - \mathbf{b}) \\
            - & \sum\limits_{s,w} h_{sw} \Big[\textnormal{Tr}\big[\mathbf{\Pi}^\top\mathbf{B}_{sw}\big]-\mathbf{F}_{sw}\Big].
        \end{align*}
        First order conditions yield for every $i,j$
        \begin{align*}
            \frac{\partial \mathcal{E}\big(\mathbf{\Pi},\mathbf{f},\mathbf{g},\mathbf{h}\big)}{\partial \mathbf{\Pi}_{ij}} = \mathbf{C}_{ij} + \varepsilon\log\big(\mathbf{\Pi}_{ij}\big) + \varepsilon- \mathbf{f}_i - \mathbf{g}_j - \sum\limits_{sw}h_{sw} \big[\mathbf{B}_{sw}\big]_{ij}
            = 0
        \end{align*}
        which we may rewrite as 
        \begin{align*}
            \mathbf{\Pi}_{ij} = & \exp\big(\mathbf{f}_i/\varepsilon\big) \exp\Big(-\varepsilon^{-1}\mathbf{C}_{ij} + \varepsilon^{-1}\sum\limits_{sw}h_{sw}  \big[\mathbf{B}_{sw}\big]_{ij}-1\Big) \exp\big(\mathbf{g}_j/\varepsilon\big)\\
            =&\exp\big(\mathbf{f}_i/\varepsilon\big) \exp(-\mathbf{C}_{ij}/\varepsilon-1)\prod_{s'=1}^{K_s}  \prod_{w'=1}^{K_w} \exp\big(h_{s'w'} [\mathbf{B}_{s'w'}]_{ij}  / \varepsilon\big)\exp\big(\mathbf{g}_j/\varepsilon\big)\enspace.
        \end{align*}
        Now, remark  that for any $(i,j)\in\{1,\dots,n\}^2$, there is only one $(s,w)\in\{1,\dots,K_s\}\times\{1,\dots,K_w\}$ such that $[\mathbf{B}_{sw}]_{i,j} \neq 0$. Therefore, in the product $\prod_{s'=1}^{K_s}  \prod_{w'=1}^{K_w} \exp\big(h_{s'w'}[\mathbf{B}_{s'w'}]_{ij}\big) $, there is only one term distinct from $1$. As a consequence
        \begin{align*}
            \prod_{s'=1}^{K_s}  \prod_{w'=1}^{K_w} \exp\big(h_{s'w'}[\mathbf{B}_{s'w'}]_{ij}  / \varepsilon\big) &= \prod_{s'=1}^{K_s}  \prod_{w'=1}^{K_w} \exp\big(h_{s'w'} \mathds{1}_{\mathbf{s}_i=s'}\mathds{1}_{\mathbf{w}_j=w'}/ \varepsilon\big) \\
            &= \sum\limits_{s'=1}^{K_s}\sum\limits_{w'=1}^{K_w}\exp\big(h_{s'w'}  / \varepsilon\big) \mathds{1}_{\mathbf{s}_i=s'}\mathds{1}_{\mathbf{w}_j=w'}\\
            &=  \sum\limits_{s'=1}^{K_s}\sum\limits_{w'=1}^{K_w}\exp\big(h_{s'w'}  / \varepsilon\big) \big[\mathbf{B}_{s'w'}\big]_{ij}\enspace .
        \end{align*}
        This leads to the matrix form 
        \begin{align*}
            \mathbf{\Pi} = \textnormal{diag}\big(e^{\mathbf{f}/\varepsilon}\big)\big(\mathbf{K} \odot \mathbf{H}\big)\textnormal{diag}\big(e^{\mathbf{g}/\varepsilon}\big)
        \end{align*}
        where 
        \begin{align*}
             & \mathbf{K} := e^{-\mathbf{C}/\varepsilon-1}\\
             & \mathbf{H} := \sum\limits_{s=1}^{K_s}\sum\limits_{w=1}^{K_w} \exp({h_{sw}/\varepsilon})\odot\mathbf{B}_{sw}.
        \end{align*}
    \end{proof}
\section{Proofs}
\subsection{Proof of~\Cref{prop:uniquess_existence_fair_ot}}\label{proof:uniquess_existence_fair_ot}
\begin{lemma}\label{lemma:non_emptiness} Let $\mu\in \mathcal{P}(\mathcal{X}\times \mathcal{S})$ and $\mathcal{\eta}\in \mathcal{P}(\mathcal{Y}\times \mathcal{W})$. Let $p\in \mathcal{P}(\mathcal{S})$ and $q\in \mathcal{P}(\mathcal{W})$ be obtained from $\mu$ and $\eta$ by marginalizing, respectively, $x$ and $y$, that is,
\begin{align}\label{eq:S-maginal}
p(S=s)&=\mu(\mathcal{X}\times \lbrace s\rbrace)\\
    \label{eq:w-marginal}q(W=w)&=\eta(\mathcal{Y}\times \lbrace w\rbrace).
\end{align}
Finally let $F\in \Pi(p,q)$. 

There exists $\pi \in \Pi(\mu,\eta)$ such that $\pi$ is $F-$fair, that is
\begin{align*}
\pi\big(\mathcal{\mathcal{X}}\times \lbrace s \rbrace \times \mathcal{\mathcal{Y}}\times \lbrace w \rbrace\big)= F(S=s,W=w).
\end{align*}
\end{lemma}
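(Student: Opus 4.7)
The plan is to give an explicit construction via disintegration. Since $\mathcal{S}$ and $\mathcal{W}$ are finite, we can disintegrate $\mu$ and $\eta$ along their second components: for each $s$ with $p_s>0$ let $\mu_s \in \mathcal{P}(\mathcal{X})$ denote the conditional law of $X$ given $S=s$, so that
\begin{equation*}
    \mu = \sum_{s=1}^{K_s} p_s \, (\mu_s \otimes \delta_s),
\end{equation*}
and similarly write $\eta = \sum_{w=1}^{K_w} q_w \, (\eta_w \otimes \delta_w)$ for conditionals $\eta_w \in \mathcal{P}(\mathcal{Y})$ whenever $q_w>0$. When $p_s=0$ (resp.\ $q_w=0$) we pick $\mu_s$ (resp.\ $\eta_w$) arbitrarily, which is harmless because the corresponding $F_{sw}$ vanishes, as shown below.

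Given $F \in \Pi(p,q)$, I would then define the candidate coupling
\begin{equation*}
    \pi \;:=\; \sum_{s=1}^{K_s}\sum_{w=1}^{K_w} F(s,w)\;(\mu_s\otimes \delta_s)\otimes(\eta_w\otimes \delta_w)
\end{equation*}
as a measure on $(\mathcal{X}\times\mathcal{S})\times(\mathcal{Y}\times\mathcal{W})$. Since $F\geq 0$ and $\sum_{s,w}F(s,w)=1$, this is a probability measure. Note that any $(s,w)$ with $p_s=0$ or $q_w=0$ has $F(s,w)=0$ by the marginal constraints, so the choice of $\mu_s$ or $\eta_w$ in that case is irrelevant.

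The remaining work is three routine marginal computations:
\begin{enumerate}[itemsep=0pt, parsep=2pt, topsep=2pt]
    \item The $(\mathcal{X}\times\mathcal{S})$-marginal: for any measurable $A\subseteq \mathcal{X}$ and $s_0\in\mathcal{S}$,
    \begin{equation*}
        \pi\big(A\times\{s_0\}\times\mathcal{Y}\times\mathcal{W}\big) = \sum_{w} F(s_0,w)\, \mu_{s_0}(A) = p_{s_0}\, \mu_{s_0}(A) = \mu(A\times\{s_0\}),
    \end{equation*}
    using $\sum_w F(s_0,w)=p_{s_0}$.
    \item The $(\mathcal{Y}\times\mathcal{W})$-marginal equals $\eta$ by the symmetric computation.
    \item The $(\mathcal{S}\times\mathcal{W})$-marginal: for any $(s_0,w_0)$,
    \begin{equation*}
        \pi\big(\mathcal{X}\times\{s_0\}\times\mathcal{Y}\times\{w_0\}\big) = F(s_0,w_0),
    \end{equation*}
    so $\pi$ is indeed $F$-fair.
\end{enumerate}

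There is no real obstacle here: the only subtlety is handling the degenerate case $p_s=0$ or $q_w=0$, which is absorbed by the compatibility conditions $\sum_w F(s,w)=p_s$ and $\sum_s F(s,w)=q_w$. Consequently $\Pi_\mathbf{F}(\mu,\eta)$ is non-empty for every $F \in \Pi(p,q)$.
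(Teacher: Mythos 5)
Your proposal is correct and is essentially the paper's own construction: writing $\pi$ as $\sum_{s,w}F(s,w)\,(\mu_s\otimes\delta_s)\otimes(\eta_w\otimes\delta_w)$ with $\mu_s(A)=\mu(A\times\{s\})/p_s$ is exactly the paper's product formula $F(s,w)\,\mu(A\times\{s\})\,\eta(B\times\{w\})/(p_s q_w)$, and the marginal checks are identical. Your explicit handling of the degenerate case $p_s=0$ or $q_w=0$ is a minor but welcome refinement over the paper's version.
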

\begin{proof}
Given measurable sets $A\subseteq \mathcal{X}$ and $B\subseteq \mathcal{Y}$ let $\pi$ be defined as
\begin{align*}
    \pi\big(A\times \lbrace s\rbrace \times B\times \lbrace w\rbrace\big):=\frac{F(S=s,W=w)\mu(A\times \lbrace s\rbrace )\eta(B\times \lbrace w \rbrace )}{p(S=s)q(W=w)}
\end{align*}
and let's check that \textbf{i)} $\pi\in \Pi(\mu,\eta)$ and \textbf{ii)} $\pi$ is $F-$fair:
\begin{itemize}
    \item[i)] We want to check that $\pi(A\times \lbrace s\rbrace \times \mathcal{Y}\times \mathcal{W})=\mu(A\times \lbrace s\rbrace)$ and similarly for the other marginal. To this end note that 
    \begin{align*}
      \pi(A\times \lbrace s\rbrace \times \mathcal{Y}\times \mathcal{W})&= \sum_{w\in \mathcal{W}}\frac{F(S=s,W=w)\mu(A\times \lbrace s\rbrace )\eta(\mathcal{Y}\times \lbrace w \rbrace )}{p(S=s)q(W=w)}\\
      &=\mu(A\times \lbrace s\rbrace )\sum_{w\in \mathcal{W}}\frac{F(S=s,W=w)}{p(S=s)},
    \end{align*}
    where the second equality follows from \eqref{eq:w-marginal}. To finish note that from $F\in \Pi(p,q)$  follows that $\sum_{w\in \mathcal{W}} F(S=s,W=w)=p(S=s)$. A similar argument shows that $\pi(\mathcal{X}\times \mathcal{S} \times B\times \lbrace w\rbrace )=\eta(B\times \lbrace w\rbrace)$.
    \item[ii)] To see that $\pi$ is $F-$fair note that it is immediate from \eqref{eq:S-maginal} and \eqref{eq:w-marginal} that 
    \begin{align*}
        \pi\big(\mathcal{\mathcal{X}}\times \lbrace s \rbrace \times \mathcal{\mathcal{Y}}\times \lbrace w \rbrace\big)&=\frac{F(S=s,W=w)\mu(\mathcal{X}\times \lbrace s\rbrace )\eta(\mathcal{Y}\times \lbrace w \rbrace )}{p(S=s)q(W=w)}\\
        &=F(S=s,W=w).
    \end{align*}
\end{itemize}
\end{proof}
\begin{prop}
    Assume that $\mu$ and $\eta$ have bounded support. Let $F$ be a coupling of the marginals of $\mu$ and $\eta$ denoted by $p$ and $q$ defined via \eqref{eq:S-maginal}-\eqref{eq:w-marginal}.  Then there exists a unique $F$-fair transport plan.
\end{prop}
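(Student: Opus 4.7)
The plan is to establish existence by a direct compactness/lower semi-continuity argument and then derive uniqueness from the strict convexity of the KL divergence. We work with the weak topology on the space of probability measures on the product space.

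\textbf{Step 1: Non-emptiness and closedness of the feasible set.} Non-emptiness of $\Pi_\mathbf{F}(\mu,\eta)$ is exactly the content of Lemma~\ref{lemma:non_emptiness}. Next I would show that $\Pi_\mathbf{F}(\mu,\eta)$ is closed for the weak topology. Since $\mathcal{S}$ and $\mathcal{W}$ are finite (equipped with the discrete topology), each set $\mathcal{X}\times\{s\}\times\mathcal{Y}\times\{w\}$ is both open and closed in the product space, so its indicator is a bounded continuous function. The linear map $\pi\mapsto\pi_{SW}(s,w)$ is therefore continuous for the weak topology, and $\Pi_\mathbf{F}(\mu,\eta)$ is the intersection of $\Pi(\mu,\eta)$ (which is weakly closed) with finitely many preimages of singletons under continuous maps, hence weakly closed.

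\textbf{Step 2: Compactness and existence.} Under Assumption~\ref{asm:compact_support}, all couplings in $\Pi(\mu,\eta)$ are supported in the compact set $\text{supp}(\mu)\times\text{supp}(\eta)$, so $\Pi_\mathbf{F}(\mu,\eta)\subseteq \Pi(\mu,\eta)$ is tight and, by Prokhorov, weakly precompact; combined with Step~1 it is weakly compact. The ground cost $c$ is continuous on the compact support (Assumption~\ref{asm:lipschitz_smooth_cost} is in fact stronger than needed here), so $\pi\mapsto\int c\,d\pi$ is weakly continuous, and the map $\pi\mapsto\mathbf{KL}(\pi\,\|\,\mu\otimes\eta)$ is weakly lower semi-continuous on probability measures. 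The objective in \eqref{eq:fairOTExactFairness} is therefore weakly lower semi-continuous on the compact set $\Pi_\mathbf{F}(\mu,\eta)$, and existence of a minimizer follows from the direct method (Weierstrass).

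\textbf{Step 3: Uniqueness.} The feasible set $\Pi_\mathbf{F}(\mu,\eta)$ is convex, since both $\Pi(\mu,\eta)$ and each constraint $\pi_{SW}(s,w)=\mathbf{F}_{sw}$ are affine. The linear term $\int c\,d\pi$ is convex and the KL divergence $\pi\mapsto\mathbf{KL}(\pi\,\|\,\mu\otimes\eta)$ is strictly convex on its effective domain. It suffices to check that the minimum is finite, which is immediate: the candidate constructed in Lemma~\ref{lemma:non_emptiness} is absolutely continuous with respect to $\mu\otimes\eta$ and has bounded density, so both $\int c\,d\pi$ and $\mathbf{KL}(\pi\,\|\,\mu\otimes\eta)$ are finite at this point. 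Hence any two minimizers would give a strictly smaller value at their midpoint, which is also feasible by convexity of $\Pi_\mathbf{F}(\mu,\eta)$, a contradiction.

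The only mildly delicate point I expect is Step~1, i.e. making sure the discrete marginal constraints behave well under weak convergence; once the discrete topology on $\mathcal{S},\mathcal{W}$ is used this is immediate. Everything else is standard direct-method + strict-convexity machinery.
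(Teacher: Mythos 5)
Your proposal is correct and follows essentially the same route as the paper's proof: non-emptiness via the explicit construction of Lemma~\ref{lemma:non_emptiness}, weak compactness of the fairness-constrained feasible set (you use tightness/Prokhorov where the paper uses weak-$\star$ compactness in the dual of $C$, which is the same thing on a compact support), lower semi-continuity for existence, and strict convexity of the KL term for uniqueness. If anything, your version is slightly more careful than the paper's, which loosely invokes ``continuity'' of the objective where only lower semi-continuity of the entropic term holds, and which does not explicitly verify finiteness of the minimum before applying the strict-convexity argument.
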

\begin{proof} Assume $\mathcal{X}$ and $\mathcal{Y}$ to be compact. Similar to the proof of Theorem 1.4. in \cite{santambrogio2015optimal} we can prove that \[\Lambda:=\big\lbrace \pi \in \mathcal{P}(\mathcal{X}\times \lbrace 0,1\rbrace \times \mathcal{Y}\times  \lbrace 0,1\rbrace): \pi_{SW}=F\big\rbrace \]
is compact with respect to the weak topology: Let $\pi_n$ be a sequence in $\Lambda$. They are probability measures, so that their mass is 1, and hence they are bounded in the dual of $C(\mathcal{X}\times \lbrace 0,1\rbrace\times \mathcal{Y}\times \lbrace 0,1\rbrace)$. Hence usual weak-$\star$ compactness in dual spaces guarantees the existence of a subsequence $\pi_n \rightharpoonup \pi$ converging to a probability $\pi$. We just need to check $\pi \in \Lambda$. This
may be done by fixing $\phi\in C(\lbrace 0,1\rbrace \times \lbrace 0,1\rbrace)$ and from $\pi_n\in \Lambda$ it follows that 
\[\int \phi \, d\pi_n=\int \phi \, d
\big[(\pi_n)_{SW}\big]=\int \phi \, d F=\sum_{s,w}\phi(s,w)F(S=s,W=w).\]
Now pass to the limit to obtain
$$\int \phi \, d\pi_{SW}=\int \phi\, d\pi=\sum_{s,w}\phi(s,w)F(S=s,W=W)$$
which shows that $\pi\in \Lambda$.

To finish, just note that $\Pi_{\text{fair}}^{\mathbf{F}}=\Pi(\mu,\eta) \cap \Lambda$ is the  intersection of two compact sets (with respect to weak topology) and  Lemma \ref{lemma:non_emptiness} establishes that it is non-empty. Continuity of the map defining the transport problem is enough to conclude the existence of a minimizer. Uniqueness is then a consequence of the strict convexity of $\text{KL}$.
    \end{proof}
\subsection{Proof of~\Cref{th:sample_complexity_penalizedOT}}\label{app:sample-complexity}
\paragraph{Notation.}
Let $\mathbf{F} \in \mathbb{R}^{K_s \times K_w}$ a given a fairness target.
We adopt the following notational conventions, $\mathbf{u}\in [0, 1]^{K_s}$, $\mathbf{u}^\top\mathbf{1}_{K_s}=1$, and $\mathbf{t}\in [0, 1]^{K_w}$, $\mathbf{t}^\top\mathbf{1}_{K_w}=1$, we define the penalization to be
\begin{align}\label{eq:def_xi}
    \xi_{sw}(\mathbf{u},\mathbf{t}) 
    := \mathbf{u}_s \mathbf{t}_w - \mathbf{F}_{sw},
\end{align}
so that given a measure $\pi\in \mathcal{M}(\mathcal{X}\times [K_s] \times \mathcal{Y}\times [K_w])$, define
\begin{equation*}
    \mathcal{L}_\mathbf{F}(\pi) 
    \vcentcolon= \sum_{s,w\in [K_s] \times [K_w]} \langle \xi_{sw}, \pi\rangle^2 = \sum_{s,w\in [K_s] \times [K_w]} {\left( \int \xi_{sw}(z,w)\, d\pi(x,z,y,w) \right)}^2,
\end{equation*}
where we use the duality pairing notation and integral notation interchangeably.
For any distribution $\mu$, $\eta$ over a set $\mathcal{X}$ such that $\mu$ is absolutely continuous with respect to $\eta$ (i.e., $\mu \ll \eta$), we denote by $\frac{d\mu}{d\eta}$ the Radon-Nikodym derivative of $\mu$ with respect to $\eta$, and
\begin{equation*}
    \kl\big(\mu||\eta\big) \vcentcolon= \int_{\mathcal{X}} \log \left( \frac{d\mu}{d\eta}(x) \right)d\mu(x).
\end{equation*}
Given any two measures $\mu\in \mathcal{M}
(\mathcal{X}
\times [K_s])$ and $\eta\in \mathcal{M}
(\mathcal{Y}
\times [K_w])$ define
\begin{align}
    m^\star(\mu,\eta):=\min_{\pi \in \Pi(\mu, \eta)} \langle c, \pi \rangle + \varepsilon\kl\big(\pi||\mu\otimes\eta\big)+ \lambda\mathcal{L}_\mathbf{F}(\pi).
\end{align}
The goal is to prove the following result
\begin{reptheorem}{th:sample_complexity_penalizedOT}
 Let $\mathcal{X}$ and $\mathcal{Y}$ be compact subsets of $\mathbb{R}^d$ and let $\mu$ and $\eta$ be probability measures on $\mathcal{X}\times [K_s]$ and $\mathcal{Y}\times [K_w]$, respectively. Let $(x_i,s_i)_{i=1}^n$ be $n$ independent and identically distributed (i.i.d.) samples from $\mu$ and let $(y_j,w_j)_{j=1}^n$ be $n$ i.i.d. samples from $\eta$; assume further that the samples from $\mu$ are independent from those from $\eta$. Finally, suppose  that the cost $c$ is $L$-Lipschitz continuous and infinitely differentiable. Then
\begin{align}\label{eq:Goal}
    \mathbb{E}_{\mu\otimes \eta}|m^\star(\mu_n,\eta_n)-m^\star(\mu,\eta)|\leq \mathcal{O}(1/\sqrt{n}^{-1}),
\end{align}
\end{reptheorem}
\begin{proof}
Let $m^\star_\infty:=m^\star(\mu,\eta)$ and $m^\star_n:=m^\star(\mu_n,\eta_n)$, and $\pi_\infty^\star$ be the minimizer attaining the $m^\star_\infty$ that is a measure on $\mathcal{X}\times [K_s] \times \mathcal{Y}\times [K_w]$ with marginals $\mu$ and $\eta$ such that 
\begin{align*}
    m_\infty^\star= \langle c,\pi^\star_\infty
\rangle +\varepsilon\kl\big(\pi^\star_\infty||\mu\otimes \eta\big)+ \lambda\mathcal{L}_\mathbf{F}(\pi^\star_\infty).
\end{align*}
Let $p^\star_\infty$ denote the Radon–Nikodym density of $\pi^\star_\infty$ with respect to $\mu \otimes \eta$ (note that the Kullback–Leibler term forces $\pi^\star_\infty$ to be absolutely continuous with respect to $\mu\otimes \eta$).

The idea is to ``sandwich'' the random variable $m_n^\star -m^\star_\infty$ between two random variables with expectation upper bounded by $\mathcal{O}(1/\sqrt{n})$, that is, to find random variables $Y_n$, $Z_n$ such that 
\begin{align}\label{eq:sandwich}
    Y_n- m_\infty^\star\leq m_n^\star-m_\infty^\star\leq Z_n-m_\infty^\star
\end{align}
and 
\begin{align*}
\mathbb{E}[|Y_n-m_\infty^\star|]\leq \mathcal{O}(1/\sqrt{n}^{-1}) \quad \text{ and } \quad \mathbb{E}[|Z_n-m_\infty^\star|]\leq \mathcal{O}(1/\sqrt{n}^{-1}).
\end{align*}
To see that this is enough to establish \eqref{eq:Goal}, observe that \eqref{eq:sandwich} implies that 
\begin{align*}
    |m_n^\star-m_\infty^\star|\leq \max(|Y_n-m_\infty^\star|,|Z_n-m_\infty^\star|)\leq |Y_n-m_\infty^\star|+|Z_n-m_\infty^\star|.
\end{align*}
\paragraph{Lower bound.} To define $Y_n$ start by observing that 
by linearizing the fairness penalization, we can incorporate this linearization into the transport cost, so that the problem is now a proper entropic optimal transport one. 
This allows us to leverage existing results on the sample complexity of optimal transport.
Moreover, the convexity of $\mathcal{L}_\mathbf{F}$ implies that this linearization   does yield  a lower bound. To this end, note that for any measure $\pi\in \mathcal{M}(\mathcal{X}\times [K_s] \times \mathcal{Y}\times [K_w])$ and any $s,w\in [K_s] \times [K_w]$, the inequality $0\leq (\langle \xi_{sw},\pi-\pi^\star_\infty\rangle)^2= (\langle \xi_{sw},\pi\rangle)^2+(\langle \xi_{sw},\pi_\infty^\star\rangle)^2-2 \langle \xi_{sw},\pi\rangle\langle \xi_{sw},\pi^\star_\infty\rangle,$ implies that 
\begin{align*}
    \mathcal{L}_\mathbf{F}(\pi)
    \geq & -\mathcal{L}_\mathbf{F}(\pi_\infty^\star)+2\sum_{(s,w)\in [K_s] \times [K_w]}\big\langle \langle \xi_{sw},\pi^\star_\infty\rangle \xi_{sw},\pi\big\rangle\\\label{eq:firstOrderCvxIneq2}
    = & \mathcal{L}_\mathbf{F}(\pi^\star_\infty)+\big\langle 2\sum_{s,w\in [K_s] \times [K_w]}\langle \xi_{sw},\pi^\star_\infty\rangle \xi_{sw},\pi-\pi^\star_\infty\big\rangle.
\end{align*}
This leads to a lower bound on $m_n^\star$ given by
\begin{equation}\label{eq:cvxFairOT_OTlowerbound}
\begin{aligned}
    m_n^\star\geq  - \lambda\mathcal{L}_\mathbf{F}(\pi_\infty^\star)+\underbrace{\min_{\substack{\pi_1=\mu_n\\
    \pi_2=\eta_n}} \langle \overbrace{c+2\lambda\sum_{s,w\in [K_s] \times [K_w]} \langle\xi_{sw},\pi_\infty^\star\rangle \xi_{sw}}^{:=\widehat{c}},\pi\rangle+\varepsilon \kl(\pi||\mu_n\otimes \eta_n)}_{:=\hat m_n^\star}:=Y_n
\end{aligned}
\end{equation}
where $\widehat m_n^\star$ is an  entropic optimal transport problem with cost $\widehat{c}$. From the sample complexity of optimal transport (see Theorem 18 in \cite{genevay2019entropy})  it follows that 
\begin{align}\label{eq:sampleComplexity_eOT}
\mathbb{E}_{\mu\otimes \eta}\big|\widehat{m}_n^\star-\widehat{m}_\infty^\star\big|\leq \mathcal{O}(\sqrt{n}^{-1}),
\end{align}
with $\widehat{m}^\star_\infty$ being the population version of $\widehat{m}_n^\star$, i.e.,
\begin{equation*}
\widehat{m}^\star_\infty:=\min_{\substack{\pi_1=\mu\\
    \pi_2=\eta}} \langle c+2 \lambda \sum_{s,w\in [K_s] \times [K_w]} \langle\xi_{sw},\pi_\infty^\star\rangle \xi_{sw},\pi\rangle+\varepsilon \kl(\pi||\mu\otimes \eta).
\end{equation*}
To finish the lower bound,  note that  $\mathbb{E}[|Y_n-m^\star_\infty|]\leq \mathcal{O}(\sqrt{n}^{-1})$ follows from \eqref{eq:sampleComplexity_eOT} provided we show that $m_\infty^\star=\widehat{m}_\infty^ \star- \lambda \mathcal{L}_\mathbf{F}(\pi_\infty^\star)$; this is the content of Lemma \ref{lemma:transport-entropic-linearized} that can be found in Appendix \ref{sec:techStuff}.
\paragraph{Upper bound.}  
Let $\hat{\pi}_n^\star$ be a minimizer attaining $\hat{m}_n^\star$ (see  \eqref{eq:cvxFairOT_OTlowerbound} for a definition of $\hat{m}_n^\star$). From the fact that $\hat{\pi}_n^\star$ satisfies the marginal constraints defining $m^\star_n$ follows that
\begin{align*}
\min_{\pi \in \Pi(\mu_n, \eta_n)} \langle c, \pi \rangle + \varepsilon\kl\big(\pi||\mu_n\otimes\eta_n\big)+ \lambda\mathcal{L}_\mathbf{F}(\pi)=m_n^\star\leq \langle c, \hat\pi_n^\star \rangle + \varepsilon\kl\big(\hat\pi_n^\star||\mu_n\otimes\eta_n\big)+ \lambda\mathcal{L}_\mathbf{F}(\hat\pi_n^\star):=Z_n.
\end{align*}

Let $\hat c$ be as in \eqref{eq:cvxFairOT_OTlowerbound} and  note that $Z_n$ decomposes as \begin{align*}
    Z_n=\underbrace{\langle \hat c, \hat\pi_n^\star \rangle + \varepsilon\kl\big(\hat\pi_n^\star||\mu_n\otimes\eta_n\big)}_{:=Z_{n_1}}+ \underbrace{\lambda\mathcal{L}_\mathbf{F}(\hat\pi_n^\star)+\langle c-\hat{c},\hat{\pi}_n^\star\rangle}_{:=Z_{n_2}}.
\end{align*}
Moreover, by definition of $\hat{\pi}_n^\star$ and  $\hat{c}$, 
\begin{align*}
Z_{n_1}=\widehat{m}_n^\star \quad \text{ and } \quad  Z_{n_2}=\lambda\mathcal{L}_\mathbf{F}(\hat\pi_n^\star)-2\lambda\sum_{(s,w)\in [K_s] \times [K_w]}\big\langle \langle \xi_{sw},\pi^\star_\infty\rangle \xi_{sw},\hat \pi_n^\star
\big\rangle .
\end{align*} 
Lemma \ref{lemma:transport-entropic-linearized} shows that $m^\star_\infty=\widehat{m}_\infty^\star-\lambda \mathcal{L}_F(\pi^\star_\infty\rangle$ and, similar to the lower bound part of the proof (see \eqref{eq:sampleComplexity_eOT}), the sample complexity of entropic optimal transport implies that,
\begin{align*}
 \mathbb{E}\big[\big| Z_{n_1}-\widehat{m}_\infty^\star\big|\big]=\mathbb{E}\big[\big| \widehat{m}_n^\star-\widehat{m}_\infty^\star\big|\big]\leq \mathcal{O}(\sqrt{n}^{-1}).
\end{align*}
Consequently, to establish that $\mathbb{E}[|Z_n-m^\star_\infty|]\leq \mathcal{O}(1/\sqrt{n})$ it is enough to show that 
\begin{align}\label{eq:Z_n2RequiredIneq}
\mathbb{E}\big[\big|Z_{n_2}-\big(-\lambda \mathcal{L}_F(\pi_\infty^\star)\big)\big|\big]\leq\mathcal{O}(1/\sqrt{n}).
\end{align}
To this end, note that $$-\lambda \mathcal{L}_\mathbf{F}(\pi_\infty^\star)=\lambda\mathcal{L}_\mathbf{F}(\pi_\infty^\star)-2\lambda\sum_{(s,w)\in [K_s] \times [K_w]}\big\langle \langle \xi_{sw},\pi^\star_\infty\rangle \xi_{sw}, \pi_\infty^\star
\big\rangle,$$
because, for any $(s,w) \in \{0,1\}^2$, $\big\langle \langle \xi_{sw},\pi^\star_\infty\rangle \xi_{sw}, \pi_\infty^\star
\big\rangle = \big\langle \xi_{sw}, \pi_\infty^\star
\big\rangle^2$, hence 
\[
    \mathcal{L}_\mathbf{F}(\pi_\infty^\star)=\sum_{(s,w)\in [K_s] \times [K_w]}\big\langle \langle \xi_{sw},\pi^\star_\infty\rangle \xi_{sw}, \pi_\infty^\star
    \big\rangle.
\]
This, in turn, reduces the proof of \eqref{eq:Z_n2RequiredIneq} to showing
\begin{equation}\label{eq:Z_n2Decomp2}
    \mathbb{E}[|\lambda\mathcal{L}_\mathbf{F}(\hat \pi_n^\star)-\lambda \mathcal{L}_\mathbf{F}(\pi_\infty^\star)|]\leq \mathcal{O}(1/\sqrt{n})
\end{equation}
and
\begin{equation}\label{eq:Z_n2Decomp1}
    \mathbb{E}\Big[2\lambda\big|\sum_{(s,w)\in [K_s] \times [K_w]}\big\langle \langle \xi_{sw},\pi^\star_\infty\rangle \xi_{sw}, \pi_\infty^\star-\hat{\pi}_n^\star
\big\rangle\big|\Big]\leq \mathcal{O}(1/\sqrt{n}).
\end{equation}
To obtain these two inequalities and complete the proof, it suffices to note that the conclusion of Lemma \ref{lemma:transport-entropic-linearized} allows us to apply~\Cref{lemma:extended_rigollet} (sub-Gaussian concentration, which is stronger, readily implies the expectation bound used here), which yields the desired inequality \eqref{eq:Z_n2Decomp2}. To obtain inequality \eqref{eq:Z_n2Decomp1} just note that, for $s,w \in [K_s] \times [K_w]$, 
\begin{align*}
\langle \xi_{sw},\pi^\star_\infty\rangle^2-\langle \xi_{sw},\hat\pi_n^\star\rangle^2=\langle \xi_{sw},\pi^\star_\infty-\hat\pi_n^\star\rangle\langle \xi_{sw},\pi^\star_\infty+\hat\pi_n^\star\rangle
\end{align*}
which implies that
\begin{align*}
\big|\langle \xi_{sw},\pi^\star_\infty\rangle^2-\langle \xi_{sw},\hat\pi_n^\star\rangle^2\big|\leq 2\|\xi_{sw}\|_\infty\big|\langle \xi_{sw},\pi^\star_\infty-\hat\pi_n^\star\rangle\big|;
\end{align*}
and the bound in expectation is once more a matter of invoking~\Cref{lemma:extended_rigollet}.
\end{proof}
There are two noteworthy dependencies in the constants of our sample complexity upper bound, namely the entropic regularization $\varepsilon$ and the fairness penalty $\lambda$. 
Our proof proceeds by reducing the sample complexity bound to that of entropic optimal transport. 
Specifically, we linearize the penalty loss, which results in a shifted cost function $\hat c$~\eqref{eq:cvxFairOT_OTlowerbound}. 
Consequently, since the sample complexity grows exponentially with $\frac{\|\hat c\|_\infty}{\varepsilon}$~\citep[Theorem 3]{genevay2019entropy}, the scaling on $\varepsilon$ matches that of entropic optimal transport ($e^{\frac{\|c\|_\infty}{\varepsilon}}$), and yields the following scaling on $\lambda$: $e^{\frac{\lambda K_s K_w}{\varepsilon}}$. That is, for reasonable values of penalization, i.e., $\lambda K_s K_w \lesssim \|c\|_\infty$, the scaling is of the same order as entropic OT.  There is also a factor, for both term, of $(1+\frac{1}{\varepsilon^{\lfloor d/2 \rfloor}})$; but we omit it as the same as entropic OT.
\subsubsection{Technical Lemmas}\label{sec:techStuff}
\begin{lemma}\label{lemma:transport-entropic-linearized}
    Let $\pi^\star_\infty$ be a minimizer of
\begin{equation}\label{eq:transport-entropic-penalized}
\min_{\substack{\pi_1= \mu\\
    \pi_2= \eta}} F(\pi) \vcentcolon= \langle c,\pi\rangle +\varepsilon\kl\big(\pi|| \mu\otimes  \eta\big)+ \lambda \mathcal{L}_\mathbf{F}(\pi) : = m_\infty^\star.
\end{equation}
Then $\pi^\star_\infty$ also minimizes 
\begin{align}\label{eq:transport-entropic-linearized}
    & \min_{\substack{\pi_1=\mu\\
    \pi_2=\eta}} F_L(\pi) \vcentcolon= \langle c, \pi\rangle+\varepsilon \kl(\pi||\mu\otimes \eta)+ \lambda \mathcal{L}_\mathbf{F}(\pi^\star_\infty)+\big\langle 2\lambda\sum_{s,w\in [K_s] \times [K_w]}\langle \xi_{sw},\pi^\star_\infty\rangle \xi_{sw},\pi-\pi^\star_\infty\big\rangle \\
    = & \min_{\substack{\pi_1=\mu\\ \nonumber
    \pi_2=\eta}} \langle c+2\lambda\sum_{s,w\in [K_s] \times [K_w]} \langle\xi_{sw},\pi_\infty^\star\rangle \xi_{sw}, \pi\rangle+\varepsilon \kl(\pi||\mu\otimes \eta)- \lambda \mathcal{L}_\mathbf{F}(\pi_\infty^\star)\\ 
    = & \widehat{m}_\infty^\star  - \lambda \mathcal{L}_\mathbf{F}(\pi^\star_\infty). \nonumber
\end{align}
and, consequently,
$
    \widehat{m}_\infty^\star  - \lambda \mathcal{L}_\mathbf{F}(\pi^\star_\infty)  = m_\infty^\star.
$
\end{lemma}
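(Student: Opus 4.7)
The plan is to leverage convexity of the fairness penalty $\mathcal{L}_\mathbf{F}$ together with first-order optimality of $\pistar$ for the original problem to show that $\pistar$ is itself a minimizer of the linearized functional $F_L$. Observe first that $F_L$ is obtained from $F$ by replacing the convex term $\lambda\mathcal{L}_\mathbf{F}(\pi)$ by its affine minorant at $\pistar$; by convexity of $\mathcal{L}_\mathbf{F}$ one has $\lambda\mathcal{L}_\mathbf{F}(\pi) \geq \lambda\mathcal{L}_\mathbf{F}(\pistar) + 2\lambda\sum_{s,w}\langle \xi_{sw},\pistar\rangle \langle \xi_{sw}, \pi-\pistar\rangle$ with equality at $\pi = \pistar$, so $F(\pi) \geq F_L(\pi)$ for every admissible $\pi$ and $F(\pistar) = F_L(\pistar) = m_\infty^\star$. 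Moreover, $F_L$ is the sum of a linear functional of $\pi$ and the strictly convex entropy term $\varepsilon\kl(\pi \| \mu\otimes\eta)$, hence strictly convex on the convex set $\Pi(\mu,\eta)$, so any first-order critical point on $\Pi(\mu,\eta)$ is its unique global minimizer there.

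The core step is a first-variation argument. Since $\pistar$ minimizes the convex $F$ over $\Pi(\mu,\eta)$, for any admissible $\pi$ the curve $\pi_t := (1-t)\pistar + t\pi$ stays in $\Pi(\mu,\eta)$ and $F(\pi_t) \geq F(\pistar)$ for $t\in[0,1]$. Differentiating at $t=0^+$ yields the variational inequality
\begin{align*}
    \langle c, \pi-\pistar\rangle + \varepsilon\big\langle 1 + \log\tfrac{d\pistar}{d(\mu\otimes\eta)},\, \pi-\pistar\big\rangle + 2\lambda\sum_{s,w}\langle \xi_{sw},\pistar\rangle\langle \xi_{sw}, \pi-\pistar\rangle \;\geq\; 0.
\end{align*}
The decisive observation is that the third term on the left equals the first variation at $\pistar$ of the \emph{linear} fairness contribution appearing in $F_L$—the $\lambda\mathcal{L}_\mathbf{F}(\pistar)$ summand being a constant with zero derivative—while the first two terms are shared by both $F$ and $F_L$. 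Hence the inequality above is exactly the first-order optimality condition of $F_L$ at $\pistar$, which is sufficient by convexity of $F_L$. Therefore $\pistar$ minimizes $F_L$ on $\Pi(\mu,\eta)$, and combining this with the rewriting of $F_L$ as $\langle \widehat c,\cdot\rangle + \varepsilon\kl(\cdot\|\mu\otimes\eta) - \lambda\mathcal{L}_\mathbf{F}(\pistar)$ displayed in the statement gives $m_\infty^\star = F(\pistar) = F_L(\pistar) = \min F_L = \widehat m_\infty^\star - \lambda\mathcal{L}_\mathbf{F}(\pistar)$, which is the claimed identity.

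The main technical subtlety is justifying the directional differentiation of $\kl(\pi_t\|\mu\otimes\eta)$ at $t=0^+$. If $\pi\not\ll \mu\otimes\eta$, then $F(\pi)=+\infty$ and the claim for such directions is trivial; otherwise, writing $p_t = (1-t)p_\infty^\star + t\, d\pi/d(\mu\otimes\eta)$, one has $p_t \geq (1-t)p_\infty^\star$, and under Assumptions~\ref{asm:compact_support}--\ref{asm:lipschitz_smooth_cost} standard entropic-OT regularity shows that $\log p_\infty^\star$ is bounded on the joint support, so that dominated convergence yields the directional derivative in the displayed form. This is the only step requiring care; the remaining algebra (matching gradients at $\pistar$ and conclusion) is routine.
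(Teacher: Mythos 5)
Your proof is correct and follows essentially the same route as the paper's: both establish that the first-order optimality (variational inequality) of $F$ at $\pi^\star_\infty$ coincides with that of $F_L$ at $\pi^\star_\infty$, because the linearized fairness term has the same first variation there, and both handle the singular directions and the directional differentiability of the Kullback--Leibler term (the paper via the monotone-convergence argument of Csisz\'ar, you via dominated convergence) as the only technical point. The observation $F \geq F_L$ with equality at $\pi^\star_\infty$ is a pleasant extra but not needed; no gap.
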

\begin{proof}
    The proof is inspired by the one from~\citep[Proposition 2.1]{rakotomamonjy2015generalized}.
    Assume $\pistar$ minimizes~\cref{eq:transport-entropic-penalized}. The presence of entropic regularization implies that $\pistar \ll \mu \otimes \eta$.
    Let 
    \[
        g_L = 2 \sum_{s,w \in [K_s] \times [K_w]} \langle \xi_{sw} , \pistar \rangle \xi_{sw}.
    \]
    We already know that for any $\pi \in \Pi(\mu,\eta)$
    \[
        \mathcal{L}_\mathbf{F}(\pi) \geq \mathcal{L}_\mathbf{F}(\pistar) + \langle g_L , \pi - \pistar \rangle.
    \]
    Define
    \[
        g_{\text{KL}} = \log\left(\frac{d\pistar}{d(\mu\otimes\eta)}\right),
    \]
    we prove, for any $\pi \in \Pi(\mu,\eta)$,
    \[
        \klpop{\pi} \geq \klpop{\pistar} + \langle g_{\text{KL}} , \pi - \pistar \rangle,
    \]
    with convex analysis\footnote{If $\pi$ is singular with respect to $\mu \otimes \eta$, the inequality is trivial. Assume $\pi \ll \mu \otimes \eta$, we define for $r\in[0,1]$, $\pi_r = (1-r)\pistar + r \pi \in \Pi(\mu,\eta)$ and $\phi(r) = \klpop{\pi_r}$. By convexity of $\phi$, $\phi(r) \geq \phi(0) + r \lim_{h \to 0^+} \frac{\phi(h) - \phi(0)}{h}$. We compute the previous directional limit using the monotone convergence theorem as in~\citet[Lemma 2.1]{CSZISAR_SUBDIFF}.},
    and thus
    \[
        F(\pi) \geq F(\pistar) + \langle c + \lambda g_L + \varepsilon g_{\text{KL}} , \pi - \pistar \rangle.
    \]
    Consequently, for any $\pi \in \Pi(\mu,\eta)$ such that  $\pi \ll \mu \otimes \eta$\footnote{If instead $\pi$ is singular with respect to $\mu \otimes \eta$, then $F(\pi) = F_L(\pi) = \infty$, so such measures are trivially excluded from being minimizers of either functional.},
    \[
        \langle c + \lambda g_L + \varepsilon g_{\text{KL}} , \pi - \pistar \rangle \geq 0,
    \]
    otherwise, for $r\in[0,1]$, $\pi_r = (1-r)\pistar + r \pi \in \Pi(\mu, \eta) \cap \{\eta; \eta \ll \mu \otimes \eta\}$, we could find $r^\star \in (0,1]$, such that $F(\pi_{r^\star}) < F(\pistar)$. In fact, by contradiction, assume $\langle c + \lambda g_L + \varepsilon g_{\text{KL}} , \pi - \pistar \rangle < 0$, as $\lim_{r \to 0^+} \frac{F(\pi_r) - F(\pistar)}{r} = \langle c + \lambda g_L + \varepsilon g_{\text{KL}} , \pi - \pistar \rangle$, we have, by continuity, the existence of a sufficiently small $r^\star \in (0, 1]$ such that $F(\pi_{r^\star}) < F(\pistar)$.
    Moreover, as $F_L(\pistar) = F(\pistar)$, for any $\pi \in \Pi(\mu,\eta)$
    \[
        F_L(\pi) \geq F_L(\pistar) + \langle c + \lambda g_L + \varepsilon g_{\text{KL}} , \pi - \pistar \rangle \geq F_L(\pistar).
    \]
    Hence, $\pistar$ is also a minimizer of~\cref{eq:transport-entropic-linearized}.
\end{proof}
\subsection{Proof of \cref{thm:subgaussian_fluctuations}}
\label{ssec:proof_bounded_deviation}
We begin by showing the following Lemma, which mimics the arguments of \citet{rigollet2022sample}.
\begin{lemma}
    Let $f_\star,g_\star$ be dual potentials such that $\int g_\star(y)d\eta(y)  = 0$. Then we have 
    \begin{align*}
         & \norm{f_\star}_\infty \leq \norm{c_\theta}_\infty\\
        & \norm{g_\star}_\infty \leq \norm{c_\theta}_\infty.
    \end{align*}
\end{lemma}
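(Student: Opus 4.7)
My plan is to proceed in two steps, exploiting the Schrödinger fixed-point equations (the first-order optimality conditions of the entropic dual) together with the non-negativity of the cost and the mean-zero normalization of $g_\star$. The fixed-point equations read, for every $x,y$,
\begin{align*}
f_\star(x) &= -\varepsilon \log \int e^{(g_\star(y) - c_\theta(x,y))/\varepsilon}\,d\eta(y), \\
g_\star(y) &= -\varepsilon \log \int e^{(f_\star(x) - c_\theta(x,y))/\varepsilon}\,d\mu(x).
\end{align*}

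\textbf{Step 1: oscillation bound on $g_\star$.} Using the second equation, for any $y,y'$, I would write
\begin{align*}
g_\star(y) - g_\star(y') = \varepsilon \log \frac{\int e^{(f_\star(x) - c_\theta(x,y'))/\varepsilon}\,d\mu(x)}{\int e^{(f_\star(x) - c_\theta(x,y))/\varepsilon}\,d\mu(x)}.
\end{align*}
Since $c_\theta \geq 0$, the pointwise inequality $c_\theta(x,y) - c_\theta(x,y') \leq c_\theta(x,y) \leq \|c_\theta\|_\infty$ implies that the ratio inside the logarithm is bounded by $e^{\|c_\theta\|_\infty/\varepsilon}$, giving $g_\star(y) - g_\star(y') \leq \|c_\theta\|_\infty$. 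Swapping $y$ and $y'$ yields $|g_\star(y) - g_\star(y')| \leq \|c_\theta\|_\infty$. Since $\int g_\star\,d\eta = 0$, the function $g_\star$ must satisfy $\min g_\star \leq 0 \leq \max g_\star$, and combining this with the oscillation bound gives $\|g_\star\|_\infty \leq \|c_\theta\|_\infty$.

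\textbf{Step 2: sup-norm bound on $f_\star$.} I would use the first Schrödinger equation in two directions. For the upper bound, Jensen's inequality yields
\begin{align*}
\int e^{(g_\star(y) - c_\theta(x,y))/\varepsilon}\,d\eta(y) \geq \exp\!\Big(\tfrac{1}{\varepsilon}\!\int (g_\star(y) - c_\theta(x,y))\,d\eta(y)\Big) = \exp\!\Big(-\tfrac{1}{\varepsilon}\!\int c_\theta(x,y)\,d\eta(y)\Big) \geq e^{-\|c_\theta\|_\infty/\varepsilon},
\end{align*}
where I used $\int g_\star\,d\eta = 0$ and $c_\theta \leq \|c_\theta\|_\infty$. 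Taking $-\varepsilon\log$ gives $f_\star(x) \leq \|c_\theta\|_\infty$. For the lower bound, I would combine Step 1 with the non-negativity of the cost to obtain $g_\star(y) - c_\theta(x,y) \leq \|c_\theta\|_\infty$, hence $\int e^{(g_\star - c_\theta)/\varepsilon}\,d\eta \leq e^{\|c_\theta\|_\infty/\varepsilon}$, which yields $f_\star(x) \geq -\|c_\theta\|_\infty$.

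The main subtlety is that the mean-zero normalization of $g_\star$ is genuinely needed twice: once to turn the oscillation bound into a sup-norm bound for $g_\star$, and a second time inside Jensen's inequality in the upper bound for $f_\star$. Without this normalization, a log-partition term would survive and the clean bound $\|f_\star\|_\infty \leq \|c_\theta\|_\infty$ would instead come with an extra constant. The only care needed elsewhere is systematic use of $c_\theta \geq 0$ to avoid picking up a factor of $2$ in the lower bound for $f_\star$.
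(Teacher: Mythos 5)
Your proof is correct, and it reaches the conclusion by a route that differs from the paper's in one meaningful respect. Both arguments start from the same Schr\"odinger marginal conditions, and your Step~2 upper bound on $f_\star$ (Jensen's inequality applied to $\int e^{(g_\star-c_\theta)/\varepsilon}d\eta$ together with $\int g_\star\,d\eta=0$ and $c_\theta\le\norm{c_\theta}_\infty$) is essentially identical to the paper's opening step. Where you diverge is in how the bound on $g_\star$ is obtained: the paper first proves $f_\star\le\norm{c_\theta}_\infty$, feeds that into the other marginal constraint to get $g_\star\ge-\norm{c_\theta}_\infty$, and then must invoke the additional fact $\int f_\star\,d\mu\ge 0$ (borrowed from Rigollet--Stromme) to run a symmetric argument for the remaining two inequalities. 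Your oscillation argument --- bounding $g_\star(y)-g_\star(y')$ by $\norm{c_\theta}_\infty$ directly from the fixed-point equation and $c_\theta\ge 0$, then converting to a sup-norm bound via $\operatorname{ess\,inf}g_\star\le 0\le\operatorname{ess\,sup}g_\star$ --- extracts the full two-sided bound on $g_\star$ from the single stated normalization $\int g_\star\,d\eta=0$, so you never need the auxiliary claim $\int f_\star\,d\mu\ge 0$. This makes your proof slightly more self-contained relative to the lemma as stated; the paper's version, by contrast, generalizes more mechanically when one instead normalizes $f_\star$ or has sign information on both potentials. The only cosmetic caveat is that the suprema, infima, and the validity of the fixed-point equations should be understood $\mu$- and $\eta$-essentially (i.e., over the supports), which is consistent with how $\norm{\cdot}_\infty$ is defined in Assumption~\ref{asm:bounded_cost}.
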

\begin{proof}
    Let $f_\star,g_\star$ be two dual potentials. Since 
    \begin{align*}
        \int e^{-\frac{1}{\varepsilon}\big(c_\theta(x,y)-f_\star(x)-g_\star(y)\big)}d\eta(y) = 1,
    \end{align*}
    we have, by using the fact that $\eta$ and $\mu$ have bounded support, that 
    \begin{align*}
        1 = \int e^{-\frac{1}{\varepsilon}\big(c_\theta(x,y)-f_\star(x)-g_\star(y)\big)}d\eta(y) \geq e^{-\frac{1}{\varepsilon}\big(\max\limits_{x,y}c_\theta(x,y)-f_\star(x)\big)} \int e^{\frac{1}{\varepsilon}g_\star(y)}d \eta(y).
    \end{align*}
    By Jensen's inequality,
    \begin{align*}
        \int e^{\frac{1}{\varepsilon}g_\star(y)}d \eta(y) \geq e^{\frac{1}{\varepsilon}\int g_\star(y)d\eta(y) } = 1
    \end{align*}
    using the convention $\int g_\star(y)d\eta(y) = 0$. We are thus left with 
    \begin{align*}
        e^{-\frac{1}{\varepsilon}\big(\max\limits_{x,y}c_\theta(x,y)-f_\star(x)\big)} \leq 1
    \end{align*}
    which can be rearranged to yield
    \begin{align*}
        f_\star(x) \leq \norm{c_\theta}_\infty
    \end{align*}
    $\mu$-almost everywhere where $\norm{c_\theta}_\infty := \max\limits_{x,y} c(x,y)$. We may now using this inequality to write that 
    \begin{align*}
        f_\star(x)+g_\star(y)-c_\theta(x,y) \leq f_\star(x)+g_\star(y) \leq g_\star(y) + \norm{c_\theta}_\infty
    \end{align*}
    since $c_\theta(x,y) \geq 0$. This gives 
    \begin{align*}
        e^{-\frac{1}{\varepsilon}\big(c_\theta(x,y)-f_\star(x)-g_\star(y)\big)} \geq e^{-\frac{1}{\varepsilon}\big(g_\star(y)+ \norm{c_\theta}_\infty\big)}.
    \end{align*}
    Using again the fact that the left-hand side of this inequality is equal to $1$, we get 
    \begin{align*}
        g_\star(y) \geq -\norm{c_\theta}_\infty. 
    \end{align*}
    Leveraging the same argument than \citet{rigollet2022sample}, we have 
    \begin{align*}
        \int f_\star(x)d\mu(x) \geq 0,
    \end{align*}
    which in turn allows us to conclude by symmetrical arguments and to get
    \begin{align*}
        & \norm{f_\star}_\infty \leq \norm{c_\theta}_\infty\\
        & \norm{g_\star}_\infty \leq \norm{c_\theta}_\infty.
    \end{align*}
\end{proof}
Using this Lemma, we get the following result which extends Theorem 6 from \citet{rigollet2022sample} to parametrized costs. 
\begin{lemma}
\label{lemma:extended_rigollet}
Let $\varphi \in L^\infty(\mu \otimes \eta)$. For all $t > 0$, with probability at least $1-18e^{-t^2}$, we have
\begin{align*}
    \big\lvert \int \varphi d(\pi_n-\pi_\star) \big\lvert \lesssim \frac{\exp\big(5R_\Theta \varepsilon^{-1}\big) \norm{\varphi}_{L^\infty(\mu \otimes \eta)}\cdot t}{\sqrt{n}}.
\end{align*}
\end{lemma}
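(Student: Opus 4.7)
The plan is to reproduce the proof of Theorem~6 in \citet{rigollet2022sample} almost verbatim, substituting the universal constant $R_\Theta$ for their cost bound $\norm{c}_\infty$ throughout. Their argument depends on the cost only through the $L^\infty$-norms of the two Sinkhorn dual potentials associated with $\pi_\star$, and the Lemma immediately preceding this statement already shows $\max(\norm{f_\star}_\infty, \norm{g_\star}_\infty) \leq \norm{c_\theta}_\infty \leq R_\Theta$ under Assumption~\ref{asm:bounded_cost}. Since the resulting tail bound depends on the parameter only through $R_\Theta$, applying it for each fixed $\theta$ automatically supplies the uniformity later exploited in Theorem~\ref{thm:subgaussian_fluctuations}.

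Concretely, I would exploit the closed-form Schr\"odinger representation $d\pi_\star/d(\mu\otimes\eta) = \exp((f_\star + g_\star - c_\theta)/\varepsilon)$ and its empirical analogue for $\pi_n$ with potentials $(\hat f_n, \hat g_n)$ to split
\[
\int \varphi\, d(\pi_n - \pi_\star) = \underbrace{\int \varphi\, e^{(f_\star + g_\star - c_\theta)/\varepsilon}\, d(\mu_n \otimes \eta_n - \mu \otimes \eta)}_{(\mathrm{I})} + \underbrace{\int \varphi\, \big(e^{(\hat f_n + \hat g_n - c_\theta)/\varepsilon} - e^{(f_\star + g_\star - c_\theta)/\varepsilon}\big)\, d(\mu_n \otimes \eta_n)}_{(\mathrm{II})}.
\]
Term $(\mathrm{I})$ is an empirical process whose integrand is bounded in sup-norm by $\norm{\varphi}_\infty \exp(2R_\Theta/\varepsilon)$, so a Hoeffding-type bounded-differences inequality on the product empirical measure yields a sub-Gaussian tail of the desired form with prefactor $\exp(2R_\Theta/\varepsilon)$. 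For term $(\mathrm{II})$, the mean-value bound $|e^a - e^b| \leq e^{\max(a,b)} |a-b|$ combined with the potential bound reduces everything to controlling $\norm{\hat f_n - f_\star}_\infty$ and $\norm{\hat g_n - g_\star}_\infty$; these are estimated through \citet{rigollet2022sample}'s stability lemma for the Sinkhorn fixed-point, whose perturbation input is again an empirical process with integrand dominated by a single $\exp(R_\Theta/\varepsilon)$ factor.

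Summing the contributions yields the cumulative exponent $5R_\Theta/\varepsilon$, and union-bounding over the constant number of concentration events (empirical-measure fluctuations for each marginal and stability events for each potential, each failing with sub-Gaussian probability at most $e^{-t^2}$ up to absolute constants) delivers the total failure probability $1 - 18 e^{-t^2}$.

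The main obstacle I anticipate is checking that the potential-stability step in \citet{rigollet2022sample} does not secretly rely on cost smoothness, since Assumption~\ref{asm:lipschitz_smooth_cost} is deliberately not invoked here. Fortunately, their stability estimate ultimately reduces to an $L^\infty$ contraction of the softmin operator that depends only on the sup-norm of the potentials (via the preceding Lemma) and on compact support (Assumption~\ref{asm:compact_support}); hence Assumption~\ref{asm:bounded_cost} alone suffices. The remaining delicate bookkeeping is to trace the exponentials across the decomposition so that the cumulative factor is exactly $5R_\Theta/\varepsilon$ and not something larger.
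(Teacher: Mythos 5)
Your proposal is correct and follows essentially the same route as the paper: both arguments reduce to Theorem~6 of \citet{rigollet2022sample}, with the only new ingredient being the uniform potential bounds $\norm{f_\star}_\infty, \norm{g_\star}_\infty \leq \norm{c_\theta}_\infty \leq R_\Theta$ from the preceding lemma, which replace the normalization $c \leq 1$ in the original argument. The paper phrases this as an explicit rescaling of the cost and of $\varepsilon$ by $\norm{c_\theta}_\infty$ before invoking their Proposition~10 and Theorem~6, whereas you re-trace their proof with $R_\Theta$ substituted directly; the two are equivalent, and your concluding observation that no cost smoothness is needed matches the paper's omission of Assumption~\ref{asm:lipschitz_smooth_cost} here.
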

\begin{proof}
    We may now write 
    \begin{align*}
        p_\star(x,y) = e^{-\frac{\norm{c_\theta}_\infty}{\varepsilon}\big(\norm{c_\theta}_\infty^{-1}c_\theta(x,y)-\norm{c_\theta}_\infty^{-1}f_\star(x)-\norm{c_\theta}_\infty^{-1}g_\star(y)\big)}.
    \end{align*}
    This rescaling allows us to get back to the setup of \citet{rigollet2022sample}, who derive from their assumptions that 
    \begin{align*}
        c_\theta(x,y) \leq 1, \quad f_\star(x) \leq 1 \quad \textnormal{and} \quad g_\star(x) \leq 1,
    \end{align*}
    and use these bounds in the rest of their proof. Hence, by rescaling the entropic penalization strength as $ \varepsilon^{-1}\norm{c_\theta}_\infty$, we obtain 
    \begin{align*}
         e^{-\frac{5\norm{c_\theta}_\infty}{\varepsilon}} \leq p_\star(x,y) \leq e^{\frac{5\norm{c_\theta}_\infty}{\varepsilon}}
    \end{align*}
    from the proof of Proposition 10 of \citet{rigollet2022sample}.
    We finish the proof similarly as Theorem 6 from \citet{rigollet2022sample}.
\end{proof}
We are now ready to proceed with the proof of Theorem \ref{thm:subgaussian_fluctuations}. 
\begin{proof}
For simplicity, we consider the case where the fairness penalty is independent of the cost function. Using the identity $\big\lvert a^2-b^2\big\lvert \leq \big\lvert(a+b)\big\lvert\big\lvert(a-b)\big\lvert$, we first have that for all $\theta \in \Theta$ that
    \begin{align*}
        \big\lvert \mathcal{L}_\mathbf{F}\big(\mathbf{\Pi}_\epsilon(c_\theta)\big) - \mathcal{L}_\mathbf{F}\big(\pi^\star_\epsilon(c_\theta)\big)\big\lvert 
        \lesssim \sum\limits_{s=1}^{K_s}\sum\limits_{w=1}^{K_w}\big\lvert  \textnormal{Tr}\big(\mathbf{\Pi}_\epsilon(c_\theta)^\top\mathbf{B}_{sw}\big)- \int_{\mathcal{X}\times\lbrace s\rbrace\times\mathcal{Y}\times \lbrace w\rbrace}d\pi^\star_\epsilon(c_\theta)(x,u,y,t) \big\lvert\\
        \end{align*}
        since 
        \begin{align*}
             \sum_{s=1}^{K_s}\sum_{w=1}^{K_w}\big\lvert  \textnormal{Tr}\big(\mathbf{\Pi}_\epsilon^\top(c_\theta)\mathbf{B}_{sw}\big)+ \int_{\mathcal{X}\times\lbrace s\rbrace\times\mathcal{Y}\times \lbrace w\rbrace}d\pi^\star_\epsilon(c_\theta)(x,u,y,t) + 2\mathbf{F}_{sw}\big\lvert  
        \end{align*}
        can be upper-bounded by a constant. Now, for any $(s,w)\in[1,K_s]\times[1,K_w]$ we denote $$\varphi_{sw}~=~\xi_{sw}+~\mathbf{F}_{sw}$$ where $\xi_{sw}$ is defined by Equation \eqref{eq:def_xi}. We observe that $\varphi_{sw}\in L^\infty(\mu\otimes\eta)$. Thus, by using \cref{lemma:extended_rigollet} we have with probability at least $1-18e^{-t^2}$
        \begin{align}\nonumber
        \big\lvert\textnormal{Tr}\big(\mathbf{\Pi}_\epsilon^\top(c_\theta)\mathbf{B}_{sw}\big)- \int_{\mathcal{X}\times\lbrace s\rbrace\times\mathcal{Y}\times \lbrace w\rbrace}d\pi^\star_\epsilon(c_\theta)(x,u,y,t) \big\lvert &= \big\lvert \int \varphi_{sw} d(\mathbf{\Pi}_\epsilon(c_\theta)-\pi^\star_\epsilon(c_\theta))\big\lvert\\\label{eq:int_bound}
        &\lesssim \frac{\exp(5R_\Theta)\epsilon^{-1}\cdot t}{\sqrt{n}} 
    \end{align}
    The random variable $X_{sw} := \big\lvert\int \varphi_{sw} d(\mathbf{\Pi}_\epsilon(c_\theta)-\pi^\star_\epsilon(c_\theta))\big\rvert$ being non-negative, we have 
    $$
        \bbE\big[X_{sw}\big] = \int_0^\infty \bbP[X_{sw}>u]du.
    $$
    Letting $C_{sw}$ the constant hidden by $\lesssim$ in \eqref{eq:int_bound}, with the change of variable $u = \frac{C_{sw}\exp(5R_\Theta)\epsilon^{-1}\cdot t}{\sqrt{n}} $, we get
    \begin{align*}
        \bbE\big[X_{sw}\big] &= \frac{C_{sw}\exp(5R_\Theta)\epsilon^{-1}\cdot}{\sqrt{n}} \int_0^\infty \bbP\big[X_{sw}> \frac{C_{sw}\exp(5R_\Theta)\epsilon^{-1}\cdot t}{\sqrt{n}}\big]dt\\
        &\leq \frac{C_{sw}\exp(5R_\Theta)\epsilon^{-1}\cdot}{\sqrt{n}} \int_0^\infty \min(1, 18e^{-t^2})dt
    \end{align*}
    Let $t_0=\sqrt{\log(18)}$.
    It follows
    \begin{align*}
         \bbE\big[X_{sw}\big] &\leq \frac{C_{sw}\exp(5R_\Theta)\epsilon^{-1}\cdot}{\sqrt{n}}\big[t_0 + \int_{t_0}^\infty 18e^{-t^2} dt\big]\\
         &\leq \frac{C_{sw}\exp(5R_\Theta)\epsilon^{-1}\cdot}{\sqrt{n}}\big[t_0 + \frac{e^{-t_0^2}}{2t_0}\big]\\
         &\leq \frac{\tilde{C_{sw}}\exp(5R_\Theta)\epsilon^{-1}\cdot}{\sqrt{n}}
    \end{align*}
    where the second inequality comes from an integration by parts.
    Summing over $s$ and $w$ yields for any $\theta\in\Theta$
    $$
        \bbE[\lvert\calL_{\bfF}(\mathbf{\Pi}_\epsilon(c_\theta) - \calL_{\bfF}(\pi^*_\epsilon(c_\theta))\rvert] \lesssim  \frac{\exp(5R_\Theta)\epsilon^{-1}}{\sqrt{n}}
    $$
    As a consequence
    $$
        \sup_{\theta\in\Theta}\bbE[\lvert\calL_{\bfF}(\mathbf{\Pi}_\epsilon(c_\theta) - \calL_{\bfF}(\pi^*_\epsilon(c_\theta))\rvert] \lesssim  \frac{\exp(5R_\Theta)\epsilon^{-1}}{\sqrt{n}}.
    $$
\end{proof}
\section{Computational Details}\label{sec:impl_details}
\subsection{Implementation Details}
\subsubsection{Fair Sinkhorn}
We implement the \fairsinkhorn algorithm by adapting the Sinkhorn-Knopp \citep{Sinkhorn1967} implementation of the \texttt{Python OT} (POT) library \citep{flamary2021pot, flamary2024pot}.
\subsubsection{Penalized Optimal Transport}
We leverage the POT library \citep{flamary2021pot, flamary2024pot} throughout our experiments. To solve the penalized problem, we leverage the \texttt{generic conditional gradient} algorithm implemented in this package. We use the following parameters and refer to the package documentations for greater details.
\begin{table}[h!]
    \centering
    \begin{tabular}{c|c}
    \toprule
    \textbf{Parameter} & \textbf{Value}  \\
    \midrule
        \texttt{G0} & Initialized via a solution of the entropic OT problem\\
         \texttt{numIterMax}& $2000$ \\
         \texttt{numInnerItermax} & $200$ \\
         \texttt{stopThr}& $10^{-9}$ \\
         \texttt{stopThr2}&$10^{-9}$ \\
    \end{tabular}
    \caption{Parameters of the \texttt{gcg} algorithm used.}
    \label{tab:placeholder}
\end{table}

The algorithm is used in conjunction with \texttt{line\_search\_armijo} provided in \texttt{Python OT}.
\subsubsection{Cost Learning}
We implement bilevel cost learning in Pytorch \citep{Paszke2019}. The inner optimal transport problem is solved using the \texttt{ot.sinkhorn} method from \texttt{POT}, with a warm-started initialization of the dual variables, a stoping threshold of $10^{-6}$ and a maximum of $1000$ iterations. The used solver is \texttt{sinkhorn} if the entropic penalty is greater than $1$, otherwise we use $\texttt{sinkhorn\_log}$. The Jacobian of the optimal transport plan is estimated by iterative differentiation \cite{Pauwels2023}. We take a single optimizer step after each solve of the inner problem. We found \texttt{Adam} \citep{kingma2014adam} to provide the best and most stable results over runs. We use its default parameters, except for the learning rate. In all experiments, we set the penalty $\mathscr{D}(\cdot,\cdot)$ to be the squared Frobenius norm between the original and parametrized cost matrix.

The Mahalanobis matrix is initialized to the identity, while the MLP is pretrained with fairness penalty $\lambda = 0$ to initialize the model close to the reference cost.
\subsection{Experimental Details on Synthetic Data}
In all our experiments involving MLP cost, the MLP model has $2$ hidden layers, an output dimension of $2$ and uses \texttt{ReLU} activation. Moreover, unless otherwise specified, the entropic regularization parameter $\varepsilon$ is set to 1.

To generate results in the top part of Figure \ref{fig:big_fig} b., we generate 250 samples from the group $X$ and 25 from the group $Y$. Half of the sample has sensitive attribute $0$, the other half has sensitive attribute $1$. For \textbf{cost learning} with Mahalanobis cost, we use a penalty grid of $\texttt{logspace}(0, 4, 80)$ and the learning rate to $0.1$. For \textbf{cost learning} with MLP cost, we use a penalty grid of $\texttt{logspace}(0, 4, 80)$ and the learning rate to $0.05$.  For the \textbf{penalized OT} results, we use a penalty grid of $\texttt{logspace}(0, 3, 80)$. For the \textbf{vanilla OT} results, we use an entropic grid of $\texttt{logspace}(0, 2, 20)$.

To generate results in the bottom part of Figure \ref{fig:big_fig} b., we generate 250 samples from the group $X$ and 25 from the group $Y$. Half of the sample has sensitive attribute $0$, the other half has sensitive attribute $1$. For \textbf{cost learning} with Mahalanobis cost, we use a penalty grid of $\texttt{logspace}(1, 3, 80)$ and the learning rate to $0.05$. For \textbf{cost learning} with MLP cost, we use a penalty grid of $\texttt{logspace}(0, 4, 80)$ and the learning rate to $0.01$.  For the \textbf{penalized OT} results, we use a penalty grid of $\texttt{logspace}(0, 3, 80)$. For the \textbf{vanilla OT} results, we use an entropic grid of $\texttt{logspace}(0, 2, 20)$.

To generate results in Figure \ref{fig:recycling}, we set a fixed entropic penalty of $\varepsilon=1$. For the penalized approach, the fairness penalty is set to 90. The the cost learning approach with Mahalanobis, the fairness penalty is 1000 and the learning rate is 0.1 while for the MLP, the fairness penalty is 500 and the learning rate is 0.05. The cost are learned using the mixture of Gaussian dataset with 1000 samples from $x$ and 100 samples from $y$. We then evaluate the fairness of each cost model on subsamples of $500$ samples from $x$ and $500$ samples from $y$. The results are computed for $10$ runs. 

\subsection{Details on the Dating Data Experiment}
\label{sec:dating_app_details}
\paragraph{Determine possible matches.} In the dataset, all individuals have a recorded gender (prefer not to say, male, non binary, female or transgender) and sexual orientation (SO). For this last variable, we keep records with values that are one of gay, bisexual, pansexual, lesbian or straight, and remove other records. We then create a compatibility matrix displayed in Figure \ref{fig:potential_matches}. We acknowledge that this is a simulated setting, and that preferences recorded here might only partially correspond to real-world romantic preferences of individuals. \looseness=-1
\begin{figure}
    \centering
    \includegraphics[width=0.5\linewidth]{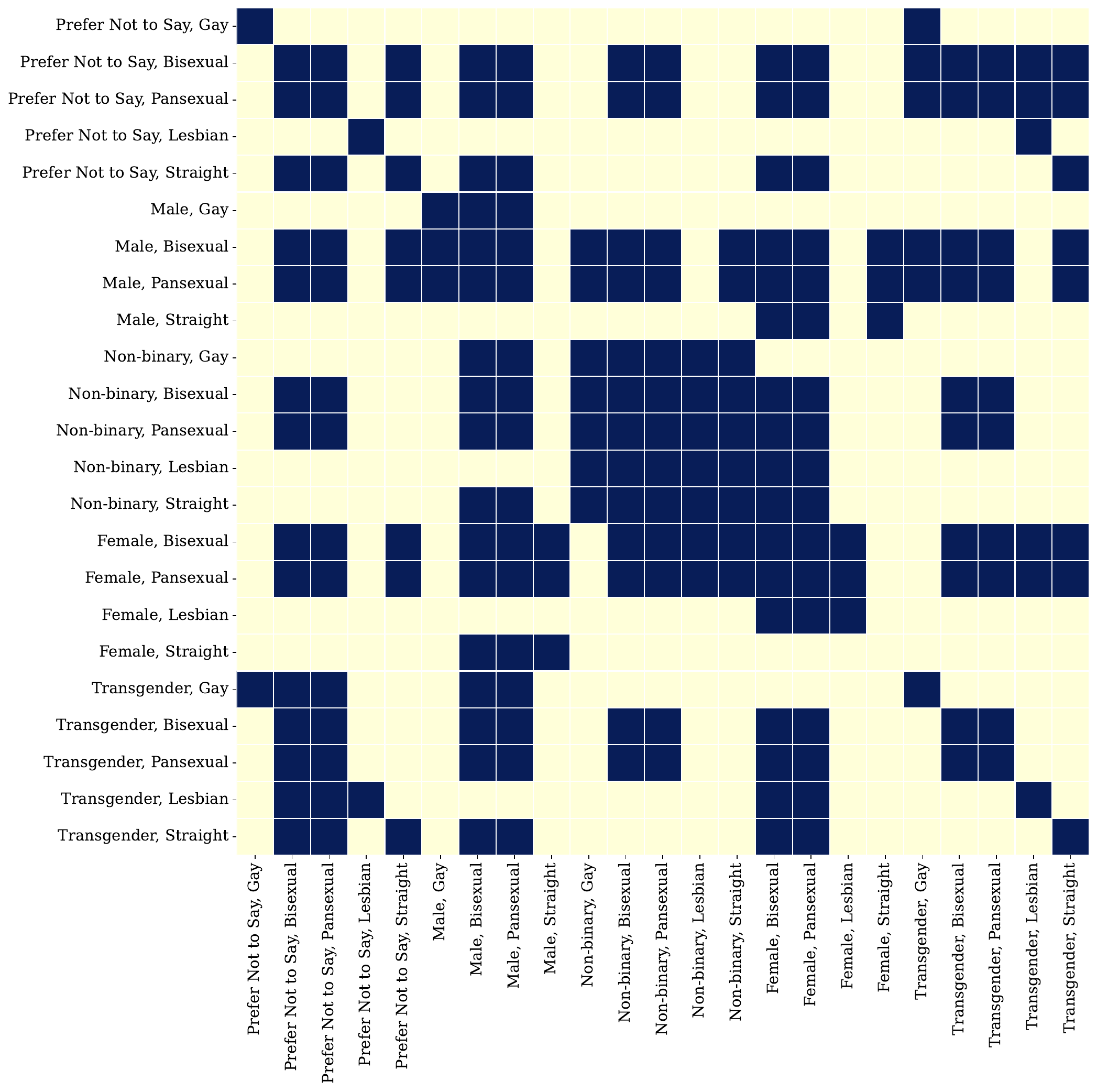}
    \caption{Possibles matches in our dating app experiment. Potential matches are displayed in blue, and impossible matches in yellow.}
    \label{fig:potential_matches}
\end{figure}
\paragraph{Feature selection.} We first remove the following features from the dataset: \texttt{app\_usage\_time\_min}, \texttt{swipe\_right\_ratio}, \texttt{swipe\_right\_label},  \texttt{mutual\_matches}, \texttt{profil\_pics\_count}, \texttt{message\_sent\_count}, \texttt{last\_active\_hour}, \texttt{match\_outcome}.
\paragraph{Realistic Subsampling.} We subsampled the dataset to approximate the contemporary U.S. joint distribution of income bracket and educational attainment. The reference education distribution is based on U.S. Census CPS educational-attainment tables for adults aged 25 and over. The benchmark income distribution is based on a percentile-style mapping of the study’s custom income brackets onto U.S. household-income quintiles and the top 5 percent. All details can be found in our Github repository.
\paragraph{Feature processing.} To map the categorical features encoding income and education level to numerical features, we order them from low to high and then multiply their rank by respectively $20$ and $100$. 
\paragraph{Distance computation.} First, to account for impossible matches we set the distance between two incompatible individuals to be $10^6$. Our distance function is then computed as a sum of feature-wise distances. 
\begin{itemize}
    \item For interests (f.i. "hiking" and "board games"), we count the number of common interests and then set the distance to be a constant divided by the number of common interests. In our application this constant is equal to $5$. 
    \item For location (in our data, metro, rural, small town or urban), app use and swipe time, the distance is equal to $0$ if the location match and $5$ otherwise.
    \item For all other numerical variables, the distance is computed as the $\ell_1$ distance between the features. 
\end{itemize}
\paragraph{Entropic regularization.} The entropic regularization parameter is set to $0.1$ for both fair optimal transport approaches.
\paragraph{Penalized OT.} We use a grid of fairness penalty parameters of \texttt{logspace(-1, 2, 17)}. The other parameters of the solver are the same as for the other experiments.
\paragraph{Cost learning.} The parametrization we use consists of reweighting the feature-wise distances and learning those weights. These weights are initialized to one so that the cost function matches the original cost at initialization. We use a grid of fairness penalty parameters of \texttt{logspace(0, 3, 17)}. The learning rate is set to $0.003$ and we use the SGD optimizer to learn the cost function.
\begin{figure}[t]
    \centering
    \begin{minipage}{0.8\textwidth}
        \begin{subfigure}{0.55\linewidth}
            \centering
            \includegraphics[width=0.85\linewidth]{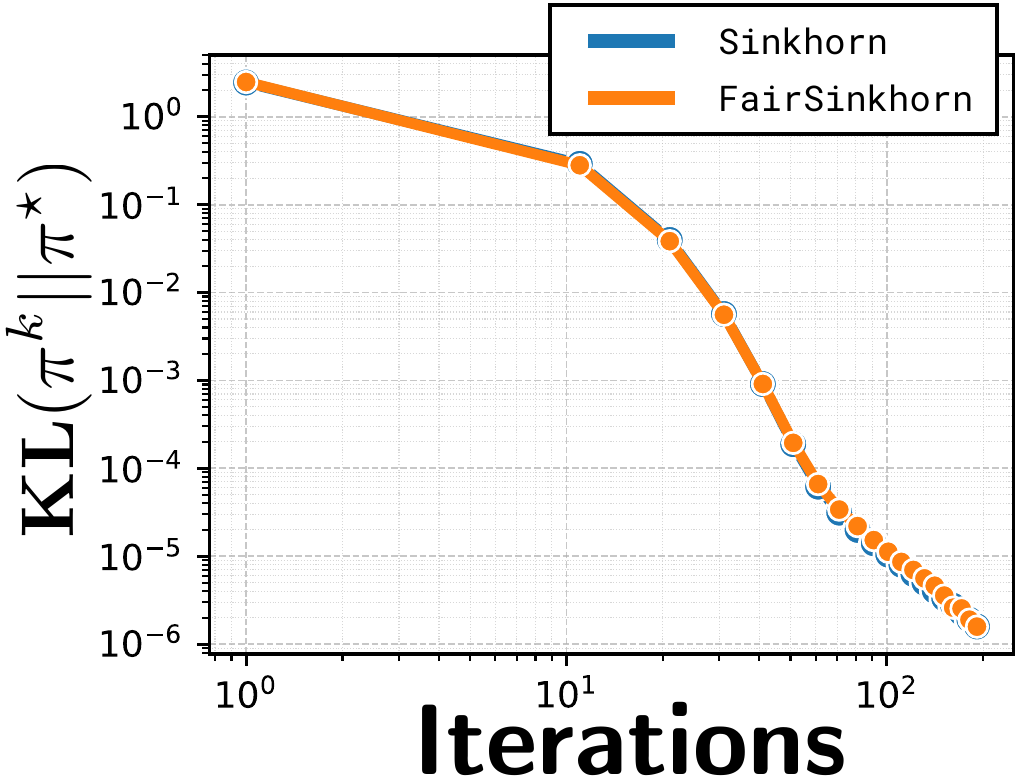}
            \caption{Convergence of \texttt{FairSinkhorn} and \texttt{Sinkhorn} on simulated data towards their respective optima.}
            \label{fig:time_fair_sinkhorn}
        \end{subfigure}
        \hfill
        \begin{subfigure}{0.4\linewidth}
            \centering
            \includegraphics[width=\linewidth]{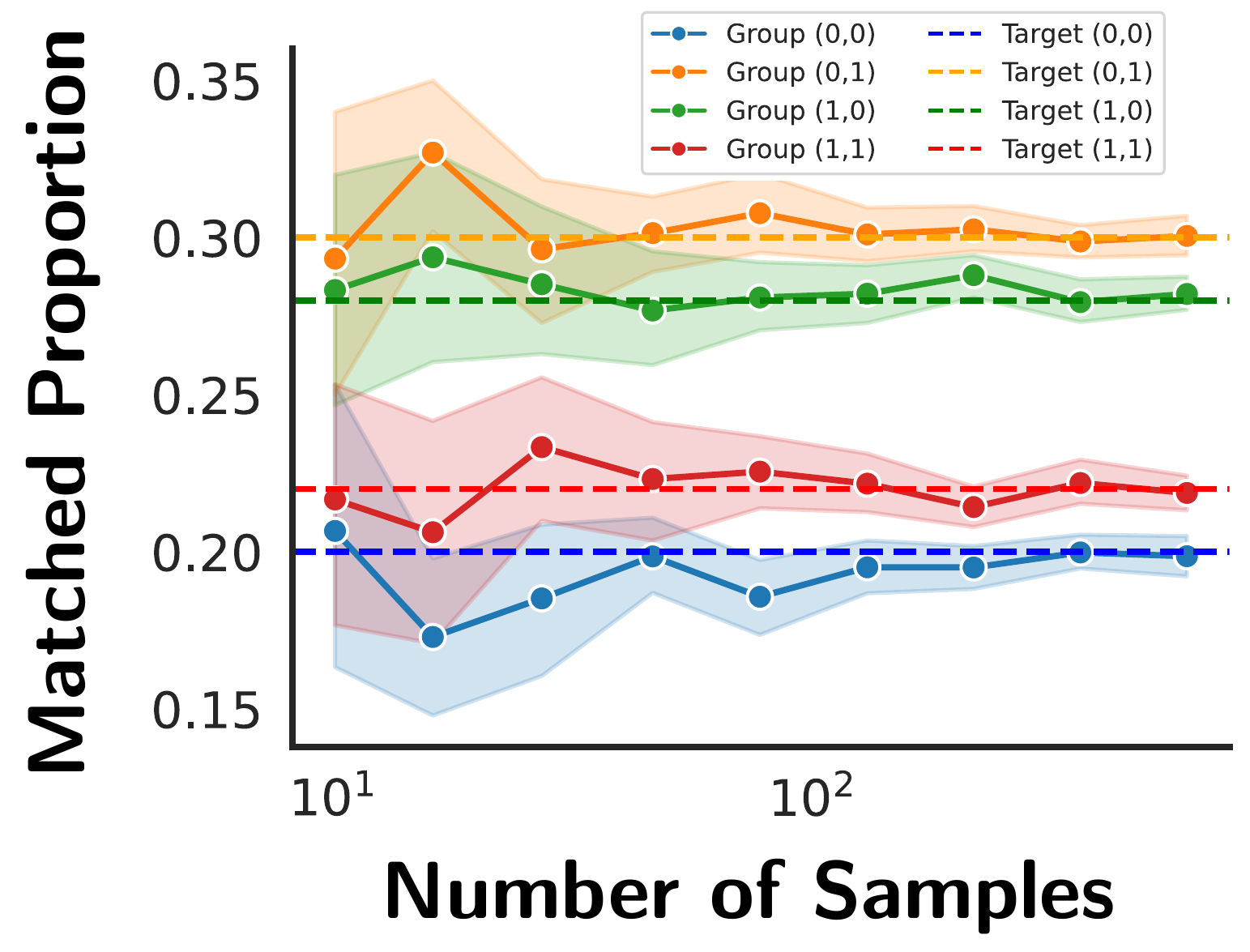}
            \caption{Results of the deterministic sampling experiment. Dotted lines display the different entries of the fairness target, and solid lines the empirical proportions of matches.}
            \label{fig:empirical_sampling}
        \end{subfigure}
    \end{minipage}
    \caption{Experimental results on simulated data.}
    \label{fig:simulated_experiments}
\end{figure}
\subsection{Additional Experiment on Deterministic Transport Plans}\label{app:sampling_plan}
We conduct a supplementary experiment to assess wether sampling from a fair optimal transport plan by row (or column) normalization leads to a transport plan that matches fairness requirements. We use the same data and fairness target as in the Gaussian data experiment, and set the entropic penalty to be $\varepsilon=1$. We proceed row-wise and sample, for each individual in $\mu_n$, one single match of $\eta_m$. More precisely, given an matrix \texttt{ot\_plan}, we sample using \begin{center}
    \texttt{torch.multinomial(ot\_plan,num\_samples=1)}.
\end{center} We average this process over $30$ runs with an increasing number of samples and report results in Figure \ref{fig:empirical_sampling}.
\subsection{Experiment on Convergence Speed of \texttt{Sinkhorn} and \texttt{FairSinkhorn}}\label{app:cvg_speed}
We generate data from $5$ dimensional Gaussians. Similarly to the previous experiment on Gaussian data, every subgroup has a different center. We use the same fairness target than in other experiments. To obtain the optimal transport plan, we run the Sinkhorn-Knopp algorithm with a maximal number of iterations set to $10^4$ and select the last iteration to be the optimum after checking for convergence. The results are displayed in \cref{fig:time_fair_sinkhorn}. One observes that both algorithm achieve similar convergence speed.
\end{document}